\documentclass[10pt,twocolumn,letterpaper]{article}

\usepackage{cvpr}              

\graphicspath{{pic/}}

\usepackage{caption}
\usepackage[accsupp]{axessibility}
\usepackage{graphicx}%
\usepackage{multirow}%
\usepackage{amsmath,amssymb,amsfonts}%
\usepackage{amsthm}%
\usepackage{mathrsfs}%
\usepackage[title]{appendix}%
\usepackage{xcolor}
\usepackage{textcomp}%
\usepackage{manyfoot}%
\usepackage{caption}
\usepackage{diagbox}
\usepackage[export]{adjustbox}
\usepackage{tcolorbox} 
\usepackage{listings}%
\usepackage{epsfig}
\usepackage{placeins}
\usepackage{makecell}
\usepackage{float}
\usepackage{booktabs}%
\usepackage{bm}
\usepackage{setspace}
\usepackage[accsupp]{axessibility}
\usepackage{subcaption} 
\usepackage[ruled,linesnumbered]{algorithm2e}

\usepackage{graphicx}
\usepackage{amsmath}
\usepackage{amssymb}
\usepackage{booktabs}

\newtheorem{proposition}{Proposition}[section]
\newcommand{\mytips}[1][]{%
    \begin{tcolorbox}[
        colframe=black,
        boxrule=1pt,
        arc=6pt,
        left=3pt,
        right=3pt,
        top=3pt,
        bottom=3pt,
        width=\linewidth,
        #1 
    ]
}

\newcommand{\diff}{\mathop{}\!\mathrm{d}}

\definecolor{darkgreen}{rgb}{0.0, 0.5, 0.0}

\usepackage{url}            
\usepackage{booktabs}       
\usepackage{amsfonts}       
\usepackage{nicefrac}       
\usepackage{microtype}      
\usepackage{xcolor}         
\usepackage{tabularx}
\usepackage{subcaption}
\usepackage{multicol}
\usepackage{multirow}
\usepackage{tabularray}
\usepackage{amsmath}
\usepackage{amssymb}
\usepackage{siunitx}
\usepackage{colortbl}
\usepackage{graphicx}
\usepackage{lipsum}
\usepackage{afterpage}
\usepackage{placeins}
\usepackage{pgfplots,gincltex}
\usepgfplotslibrary{groupplots}
\pgfplotsset{compat=1.6}
\usepackage{tikz}
\usetikzlibrary{arrows.meta}
\usetikzlibrary{intersections}   
\usepgfplotslibrary{fillbetween} 
\usepackage[accsupp]{axessibility}  

\usepackage{balance}

\definecolor{cvprblue}{rgb}{0.21,0.49,0.74}
\definecolor{pltblue}{RGB}{174, 199, 232}
\definecolor{pltorange}{RGB}{255, 229, 204}
\definecolor{pltgreen}{RGB}{204, 229, 204}
\definecolor{pltred}{RGB}{229, 204, 204}
\definecolor{pltpurple}{RGB}{239, 218, 230}

\definecolor{tabblue}{HTML}{1f77b4}
\definecolor{taborange}{HTML}{ff7f0e}
\definecolor{tabgreen}{HTML}{2ca02c}
\definecolor{tabred}{HTML}{d62728}
\definecolor{tabpurple}{HTML}{9467bd}

\definecolor{cblue}{RGB}{173, 201, 233}
\definecolor{clblue}{RGB}{222, 234, 246}
\definecolor{corange}{RGB}{255, 152, 67}
\definecolor{lorgange}{RGB}{255, 221, 149}
\definecolor{myred}{RGB}{174,66,38}

\newcolumntype{C}{>{\centering\arraybackslash}X}

\newcommand{\cc}[1]{\cellcolor{clblue!50}{#1}}

\definecolor{cvprblue}{rgb}{0.21,0.49,0.74}
\usepackage[pagebackref,breaklinks,colorlinks,allcolors=cvprblue]{hyperref}

\def\paperID{xxxx} 
\def\confName{CVPR}
\def\confYear{2026}

\title{Decoupled Residual Denoising Diffusion Models for 
Unified and Data Efficient Image-to-Image Translation}

\author{%
Ziyue Lin$^{1}$\footnotemark[1], Jiahe Hou$^{2}$\footnotemark[1], Hongyu Xia$^{1}$\footnotemark[1], Xinrui Xie$^{3}$, Feifei Wang$^{1}$, Yuyin Zhou$^{4}$, Wei Wang$^2$\\ Jiawei Liu$^{2}$\footnotemark[2], Liangqiong Qu$^{1}$\footnotemark[2]\\
$^1$The University of Hong Kong  \hspace*{0.5em}  $^2$Shenyang Institute of Automation, Chinese Academy of Sciences\\
$^3$The Chinese University of Hong Kong  \hspace*{0.5em}  $^4$University of California, Santa Cruz \\
\texttt{\{ziyue\_lin,xiahyu\}@connect.hku.hk, liujiawei@sia.cn, liangqqu@hku.hk}
}

\begin{document}
\maketitle

\begin{abstract}
We propose Decoupled Residual Denoising Diffusion models (DRDD) for unified and data-efficient image-to-image (I2I) translation. While diffusion models have advanced I2I translation in terms of quality and diversity, we uncover a previously under-explored property in diffusion models. Crucially, beyond its conventional role of manifold lifting (i.e., moving data off low-dimensional manifolds), injecting Gaussian noise facilitates domain harmonization by implicitly aligning feature distributions across domains, a property particularly advantageous for unified I2I translation. However, existing diffusion models prematurely erode this harmonization effect, as noise and residuals are simultaneously removed in a single coupled diffusion process. To address this, DRDD decouples the diffusion process into two sequential and independent diffusion stages: (1) a stochastic noise diffusion for domain harmonization and manifold lifting, and (2) a deterministic residual diffusion that learns the core semantic mapping entirely within the fixed-noise domain.  This decoupling preserves harmonization and manifold lifting effects throughout the transformation, substantially simplifying the learning of unified mappings across diverse tasks and domains. Notably, the noise diffusion stage is trained exclusively on abundant, unpaired target-domain images, greatly improving data efficiency. Comprehensive theoretical and empirical analysis demonstrates that DRDD is broadly compatible with mainstream diffusion models and consistently delivers robust, unified I2I translation, even under limited paired data. Our code is available at \href{https://github.com/HKU-HealthAI/DRDD}{https://github.com/HKU-HealthAI/DRDD}.
\end{abstract}

\renewcommand{\thefootnote}{\fnsymbol{footnote}}
\footnotetext[1]{Equally contributed.  $^{\dagger}$Corresponding author.}
\renewcommand{\thefootnote}{\arabic{footnote}}


\section{Introduction}
Image-to-image (I2I) translation aims to map an input image from a source domain to a target domain, a fundamental task in computer vision with wide-ranging applications   \citep{saharia2022palette, pang2021imagetoimagetranslationmethodsapplications}, including image restoration \citep{zheng2024diffuir, li2024foundir}, super-resolution \citep{SR3}, image style translation \citep{choi2021ilvrconditioningmethoddenoising,meng2022sdeditguidedimagesynthesis}, among others \citep{lugmayr2022repaintinpaintingusingdenoising}. Early approaches to I2I translation have been dominated by Generative Adversarial Networks (GANs) \citep{GAN, gatedgan, cyclegan}. However, GANs suffer from training instability and limited mode coverage \citep{mescheder2018trainingmethodsgansactually}. More recently, denoising diffusion models have set new benchmarks for output quality and diversity by learning a reversible process of iteratively adding and removing noise \citep{NEURIPS2020_DDPM, dhariwal2021diffusion, lipman2022flow}. Early diffusion-based I2I methods, such as SR3 \citep{SR3} and WeatherDiff \citep{weatherdiff}, typically initiate the reverse process from pure Gaussian noise, using the input image solely as a conditioning signal. To more stably preserve structural information of the input and reduce inference uncertainty, 
advanced approaches, e.g., RDDM \citep{liu2024rddm} and I2SB \citep{I2SB},  no longer initialize from pure noise; instead, they start from noise-carrying input image \citep{luo2023IRSDE, indi}. Even with variant starting points for reverse sampling, they share an underlying principle: image-to-image translation is achieved through a single, \textbf{coupled} reverse process, where noise and residuals are simultaneously removed at each diffusion step. 
\vspace{-0.5mm}
\begin{figure*}[t]
    \centering 
    \includegraphics[width=0.9\textwidth]{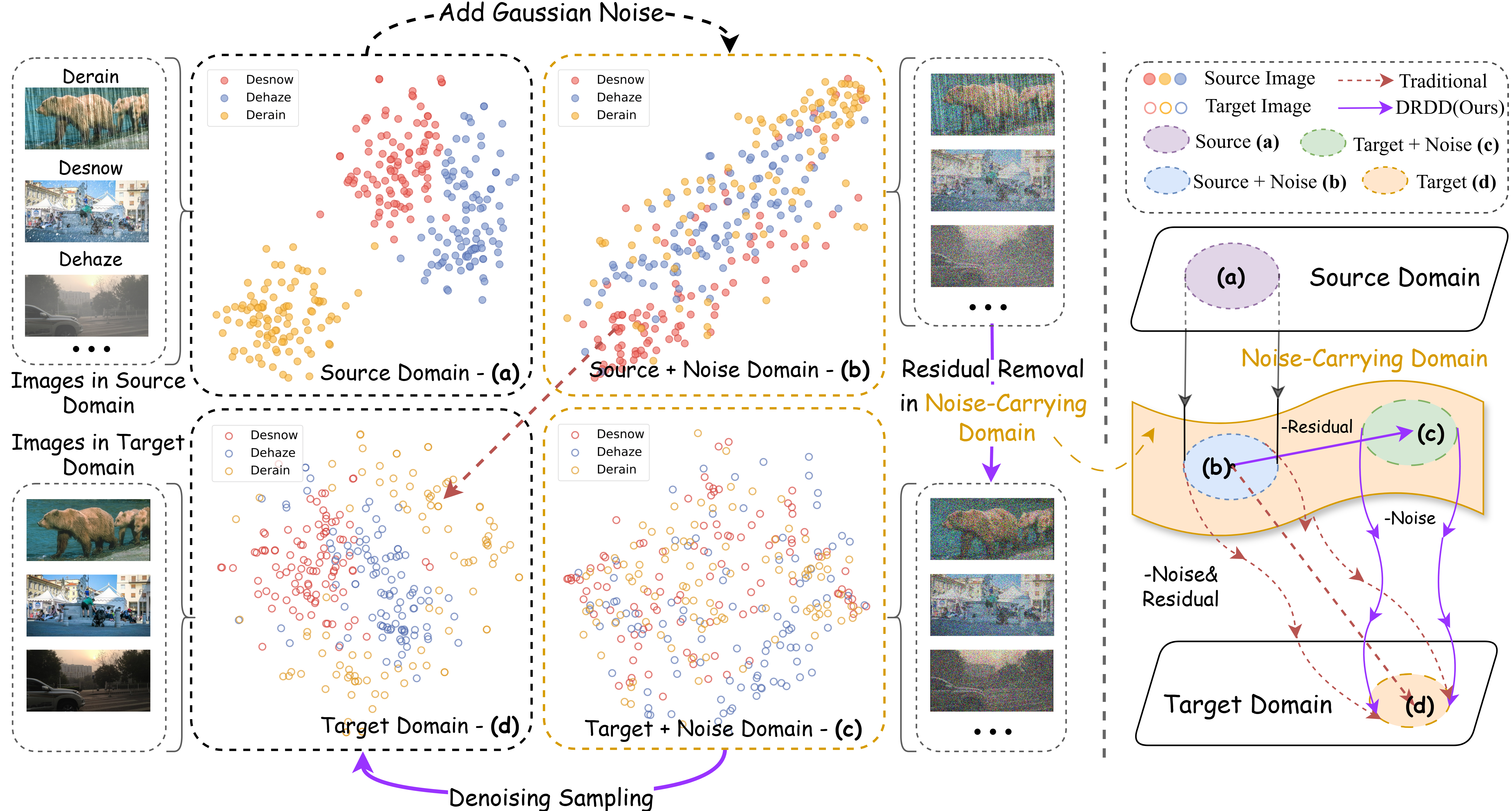}
   \caption{\textbf{Left: Domain gap reduction via noise introduction.} t-SNE plot of feature representations across three I2I translation tasks. The original source domain (a) shows a significant domain gap, complicating unified I2I translation. Introducing noise (Source+Noise domain, b) reduces the
domain gap, as shown by noticeably closer feature representation. Here, \textit{noise-carrying domains} refer to domains with noise added to original images (e.g., Source+Noise and Target+Noise domains) as noise-carrying domains, excluding pure noise. 
\textbf{Right: Reverse process comparison.} During the inference stage, traditional coupled diffusion models perform domain shifting from the noise-carrying input to the target by simultaneously removing the residual (the difference between the source and target) and the noise [(b) $\rightarrow$ (d)]. Different from them, DRDD first performs residual removal exclusively within the noise-carrying domain, and then performs a denoising process to transform noisy-carrying target into clean target [(b) $\rightarrow$ (c) $\rightarrow$ (d)].}
    \label{fig:intro}
\end{figure*}

Despite these promising advances, applying these \textbf{coupled} diffusion models to unified I2I translation, where one model must handle multiple distinct tasks and domains, remains challenging. The primary difficulty stems from the substantial domain gaps between different I2I translation tasks, as well as the challenge of collecting large-scale paired source-to-target images that adequately cover this diversity. Here, we re-examine the diffusion paradigm to enhance its suitability for data-efficient and unified I2I translation. A key insight from our investigation concerns the role of injected Gaussian noise in diffusion models. Beyond its conventional functions of moving data off low-dimensional manifolds and enriching training signals for score estimation \citep{song2019generative, lai2025principlesdiffusionmodels}, we theoretically and empirically demonstrate that injecting a certain level of Gaussian noise can act as a \textbf{``domain harmonizer"}, \textbf{pulling feature representations from disparate domains closer together} (see Fig.~\ref{fig:intro}(a), (b) and Proposition \ref{proposition:noise_reduct_KL}). This emergent property, which is particularly beneficial for unified I2I translation tasks, reveals an under-explored utility of noise in diffusion processes. Nevertheless, in prevailing coupled diffusion models \citep{liu2024rddm,luo2023IRSDE,I2SB,indi}, the reverse process progressively removes the injected noise, thereby \textbf{eroding this harmonization benefit before the source-to-target mapping is completed}. This ultimately undermines the model's effectiveness in a unified, data-efficient setting. \\
\indent To fully unleash the role of injected Gaussian noise, we propose \textbf{D}ecoupled \textbf{R}esidual \textbf{D}enoising \textbf{D}iffusion models for I2I Translation (denoted as DRDD). As shown in Fig.~\ref{fig:intro} and Fig.~\ref{fig:main_fig}, DRDD fundamentally decouples the conventional single forward diffusion process into two independent and sequential diffusion processes: (1) a stochastic noise diffusion stage first injects Gaussian perturbations for domain harmonization and manifold lifting, projecting the target domain into a harmonized, noise-carrying domain; (2) this is followed by a deterministic residual diffusion stage that learns the target-to-source mapping within this fixed-noise domain. The reverse process is symmetrically decoupled. It first performs residual removal within a noise-carrying domain (with a predefined and fixed noise level) to achieve the core source-to-target transformation, thereby preserving the domain harmonization and manifold lifting effects. This is followed by a denoising stage for fidelity refinement. This entire reverse process is clearly visualized in \textit{Fig.~\ref{fig:intro}(b) $\rightarrow$ Fig.~\ref{fig:intro}(c) $\rightarrow$ Fig.~\ref{fig:intro}(d)} and Fig.~\ref{fig:main_fig}. \\
\indent This novel decoupling mechanism offers two key advantages. First, by performing the core source-to-target mapping before any noise is removed, the domain harmonization and manifold lifting effects of the initial noise persist, thereby substantially simplifying the learning of a unified image mapping. Second, the design inherently enhances data efficiency, as the denoising stage is trained exclusively on target-domain images. This enables DRDD to leverage abundant unpaired data to boost final fidelity. As a result,  DRDD establishes a new paradigm for unified and data-efficient I2I transformation, delivering robust performance across diverse tasks, even with limited paired data. Notably, both theoretical analysis and empirical results confirm that our residual and noise decoupling idea is compatible with multiple popular diffusion paradigms, including DDPM \citep{NEURIPS2020_DDPM}, DDIM \citep{song2022ddim}, and SDE-based diffusion models \citep{song2021scorebased}. Our core contributions are listed as follows:   
\begin{itemize}
    \item We uncover and formalize a novel role of Gaussian noise as a ``domain harmonizer" in diffusion models. We theoretically and empirically demonstrate that controlled noise injection can effectively bridge the representation gap between disparate domains, a property particularly advantageous for unified I2I translation.

    \item We propose DRDD, a novel diffusion method that decouples standard diffusion into sequential noise diffusion and residual diffusion stages. This decoupling strategically separates domain harmonization from semantic mapping, ensuring the harmonization effect persists throughout the core source-to-target transformation by performing residual removal entirely within the noisy domain.

    \item Our decoupled design establishes a new paradigm for data-efficient and unified I2I translation. It not only simplifies the learning of a unified mapping across tasks but also enables the denoising stage to be trained exclusively on abundant, unpaired target-domain images, achieving robust performance with limited paired data. We further validate the broad compatibility of our framework with mainstream diffusion paradigms.

\end{itemize}


\section{Related Works}
Image-to-image translation (I2I) aims to transfer images from a source domain to a target domain while preserving the content representations, with wide applications in many computer vision tasks \citep{Zamir2021Restormer, liang2021swinir,SRGan,SR3, saharia2022palette,choi2021ilvrconditioningmethoddenoising,meng2022sdeditguidedimagesynthesis, xie2025arra, zheng2025fedvlmbench}. Diffusion models have shown their impressive performance in I2I tasks. SR3 \citep{SR3} first applies diffusion models in I2I, focusing on super-resolution and pioneers the usage of the input image as a condition for sampling from pure noise to a clear image. Subsequent works \citep{lugmayr2022repaintinpaintingusingdenoising, choi2021ilvrconditioningmethoddenoising, saharia2022palette} extend diffusion-based I2I works to image inpainting, style transformation and colorization. Meanwhile, mainstream diffusion-based I2I approaches like RDDM \citep{liu2024rddm} and others \citep{luo2023IRSDE, I2SB, indi} enhance performance by initializing the reverse process with a noisy input to better preserve input information and reduce uncertainty. This strategy drives the model to perform denoising and the required domain translation (e.g., residual removal) simultaneously within a single, coupled reverse process.

\section{Method}
\label{headings}



\subsection{Motivation}
The widespread popularity of diffusion models originates with denoising diffusion probabilistic models (DDPMs) \citep{NEURIPS2020_DDPM}. Since then, denoising and diffusion have been tightly coupled across a range of generative and I2I translation tasks. This close coupling has sometimes fostered the impression that ``\textit{the denoising {\bf network} itself is responsible for producing a clean target image containing semantic information.}'' 


We revisit this impression for two reasons. (1) If we examine only the objective function of DDPM \citep{NEURIPS2020_DDPM}, the network learns solely denoising capabilities, which has no direct connection to generating a clear target image. 
We argue that {\bf the generative semantic capability of diffusion models} does not stem from the denoising network itself, nor even from the denoising process\footnote{RDDM \citep{liu2024rddm} reveals that, whether reverse sampling begins from pure noise (e.g., vanilla DDPM \citep{NEURIPS2020_DDPM} and SR3 \citep{SR3}) or from noise-carrying input image (e.g., I2SB \cite{I2SB} and IR-SDE \citep{luo2023IRSDE} with additional diffusion terms), they all involve residual diffusion and noise diffusion. 
Our experiments in Fig.~\ref{fig:main_fig} also show that the denoising process itself only learns to remove noise and cannot generate semantic information corresponding to clear target images.}, but rather {\bf a) from the mutual representation between the predicted noise $\epsilon_{\theta}$ and the predicted target image $I_0^{\theta}$ (i.e., $I_0^{\theta}=f(\epsilon_{\theta})$, see Eq.9 in DDIM \citep{song2022ddim} and Eq.16 in RDDM \citep{liu2024rddm}); b) from the sampling formula derived from the mutual representation}. 
(2)  When diffusion models were extended from image generation to I2I translation, researchers observed that predicting residual \citep{whang2022deblurring,liu2023flow}, target images \citep{bansal2022cold, indi} or its linear transformation terms \cite{I2SB} often yielded better results than predicting noise. But noise injecting is still used empirically in I2I translations because its addition has been observed to improve performance \citep{indi}. 
These observations collectively suggest that the role of noise is nuanced. However, the community lacks a clear understanding of noise's role in I2I  translation. 

In this paper, we discover an additional role for noise in I2I translation tasks, i.e., domain harmonization that injects noise can reduce the distance between feature representations across different domains (see Fig. \ref{fig:intro}(a) and (b)), with proof provided in Section ~\ref{Sec:3.4}. To leverage this new discovery, in Section ~\ref{Sec:3.2}, we thoroughly decouple the traditional single-stage coupled forward diffusion process into a two-stage process, involving noise diffusion and residual diffusion. In Section ~\ref{Sec:3.3}, we introduce the decoupled reverse process involving the residual removal stage and denoising stage.

\begin{figure*}[t]  
    \centering
    \includegraphics[width=0.85\linewidth]{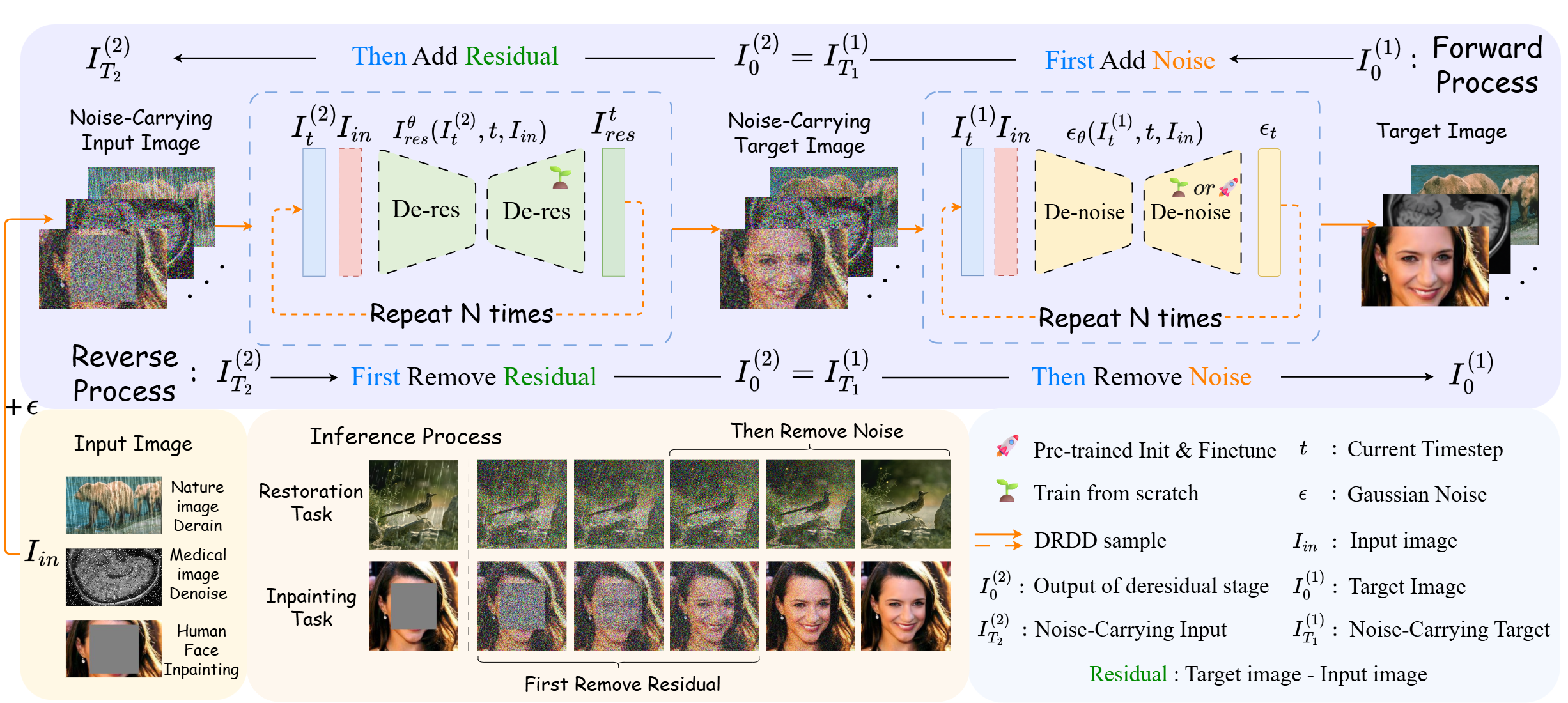}  
    \caption{\textit{Proposed DRDD framework.} DRDD decouples the traditional single forward diffusion processes into two sequential and independent process: a noise diffusion stage that injects Gaussian noise into the target image, followed by a residual diffusion stage that conducts deterministic target-to-source transformation, but now within a noise-carrying level.  The reverse diffusion process is correspondingly decoupled into a residual removal stage and a denoising stage.}
    \label{fig:main_fig}  
\end{figure*}
\subsection{The Role of Noise in Diffusion Models}\label{Sec:3.4}

Conventional functions of noise in diffusion models are moving data off low-dimensional manifolds and enriching training signals for score estimation \citep{song2019generative, lai2025principlesdiffusionmodels}, with the noise being controlled over time schedules. Beyond this, we discover that without time-step control, a certain level of fixed Gaussian noise can act as a ``domain harmonizer” in unified I2I tasks, minimizing the distribution gap of features across different domains. Here we give a mathematical expression:
 \begin{proposition}[]
    \label{proposition:noise_reduct_KL}
    Let \( P \) and \( Q \) be two distinct probability distributions over a space \( \mathcal{X} \). Suppose that we inject Gaussian noise \( \mathcal{N}(0, \sigma^2) \) (with \( \sigma \neq 0 \)) to both distributions and denote \( P_\sigma \) and \( Q_\sigma \) as the resulting distributions. Then, the Kullback-Leibler (KL) divergence between \( P_\sigma \) and \( Q_\sigma \) is less than the KL divergence between \( P \) and \( Q \):
    \begin{align}
    D_{\text{KL}}(P_\sigma \parallel Q_\sigma) < D_{\text{KL}}(P \parallel Q)
\label{eq:propo}
\end{align}
\end{proposition}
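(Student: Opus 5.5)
The plan is to apply the data processing inequality for KL divergence to the Markov kernel $K_\sigma(x,\cdot)=\mathcal{N}(x,\sigma^2 I)$, and then obtain strictness from the structure of Gaussian convolution. I take $\mathcal{X}=\mathbb{R}^d$, read ``injecting noise'' as $X+Z$ with $Z\sim\mathcal{N}(0,\sigma^2 I)$ independent of $X$, and write $P_\sigma=P*\varphi_\sigma$, $Q_\sigma=Q*\varphi_\sigma$.

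\textbf{Non-strict inequality.} Consider the joint laws $P\otimes K_\sigma$ and $Q\otimes K_\sigma$ of the pair $(X,X+Z)$. Their first marginals are $P,Q$, their second marginals are $P_\sigma,Q_\sigma$, and they share the same conditional kernel. By the chain rule for KL,
\begin{align}
D_{\text{KL}}(P\parallel Q)=D_{\text{KL}}(P\otimes K_\sigma\parallel Q\otimes K_\sigma)=D_{\text{KL}}(P_\sigma\parallel Q_\sigma)+\mathbb{E}_{P_\sigma}\!\left[D_{\text{KL}}\bigl(P(\cdot\mid y)\parallel Q(\cdot\mid y)\bigr)\right].
\end{align}
Here $P(\cdot\mid y)$ and $Q(\cdot\mid y)$ are the posteriors of $X$ given $X+Z=y$. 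This identity gives $D_{\text{KL}}(P_\sigma\parallel Q_\sigma)\le D_{\text{KL}}(P\parallel Q)$. If $D_{\text{KL}}(P\parallel Q)=\infty$, I also need $D_{\text{KL}}(P_\sigma\parallel Q_\sigma)<\infty$ to get strictness. I would simply impose finiteness, for example finite second moments, as a hypothesis; otherwise that case has to be excluded.

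\textbf{Strictness (the main obstacle).} Equality holds iff the posteriors agree for $P_\sigma$-a.e.\ $y$. By Bayes, the posteriors are
\begin{align}
P(dx\mid y)=\frac{\varphi_\sigma(y-x)\,P(dx)}{p_\sigma(y)},\qquad Q(dx\mid y)=\frac{\varphi_\sigma(y-x)\,Q(dx)}{q_\sigma(y)}.
\end{align}
Assume $D_{\text{KL}}(P\parallel Q)<\infty$, so $P\ll Q$ with density $r=dP/dQ$. Posterior equality then means $r(x)=p_\sigma(y)/q_\sigma(y)$ for $Q(\cdot\mid y)$-a.e.\ $x$, for a.e.\ $y$. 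Because $\varphi_\sigma>0$ everywhere, each $Q(\cdot\mid y)$ is mutually absolutely continuous with $Q$. Hence $r$ must be $Q$-a.s.\ equal to a constant $c(y)$, and $\int r\,dQ=1$ forces $c\equiv 1$. This gives $P=Q$, contradicting the assumption that the distributions are distinct.

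The delicate point is exactly this step: the full support of the Gaussian kernel is what makes every posterior ``see'' all of $Q$. Proving strictness with a degenerate kernel would fail. I would also note explicitly that $P_\sigma\neq Q_\sigma$ whenever $P\neq Q$, since the Gaussian characteristic function never vanishes, so both sides are genuinely positive.

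If the reader prefers a more quantitative route, de Bruijn's identity for relative entropy offers an alternative. Along the heat flow one has
\begin{align}
\frac{d}{dt}D_{\text{KL}}(P_t\parallel Q_t)=-\tfrac12 J(P_t\parallel Q_t)<0,
\end{align}
where $J$ is the relative Fisher information, which is positive since $P_t\neq Q_t$. I would mention this only as a remark, because it requires regularity that the chain-rule argument avoids.
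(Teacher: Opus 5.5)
Your proof is correct and takes the same route as the paper. Both write $D_{\text{KL}}(P\otimes K_\sigma\parallel Q\otimes K_\sigma)$ once as $D_{\text{KL}}(P\parallel Q)$ (the kernel cancels) and once, via the chain rule, as $D_{\text{KL}}(P_\sigma\parallel Q_\sigma)$ plus the expected divergence between the posteriors of $X$ given $Y$. The paper, however, only asserts that this posterior term is strictly positive because $P\neq Q$, and it even displays the intermediate joint-versus-marginal inequality with the direction reversed. You actually justify strict positivity: the everywhere-positive Gaussian kernel forces $dP/dQ$ to be $Q$-a.s.\ constant. You are also right that the statement needs an extra hypothesis to exclude $D_{\text{KL}}(P\parallel Q)=D_{\text{KL}}(P_\sigma\parallel Q_\sigma)=\infty$, a case the paper's proof passes over; finite second moments of $P$ and $Q$ suffice, since they make $D_{\text{KL}}(P_\sigma\parallel Q_\sigma)$ finite.
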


\noindent The proof is provided in Appendix \ref{Appendix:proposition}. Although this ``domain harmonizer” is particularly beneficial for unified I2I translation tasks, the conventional coupled diffusion process removes the injected noise simultaneously with residuals, thereby undermines such harmonizing benefit. 

\subsection{Decoupled Forward Process}\label{Sec:3.2}
DRDD decouples the forward diffusion process into two sequential and independent stages: a stochastic \textit{noise diffusion} followed by a deterministic \textit{residual diffusion}. As illustrated in the top row of Fig.~\ref{fig:main_fig}, the forward process starts from target image $I_0^{(1)}$, where we inject Gaussian noises to obtain noise-carrying target $I^{(1)}_{T_1}$. Then this is followed by a deterministic residual diffusion stage that models target-to-source mapping within a fixed-noise domain, leading to noise-carrying input image $I_{T_2}^{(2)}$.
 This entire process is visualized in Fig.~\ref{fig:main_fig} through ``Forward Process": $I_0^{(1)}\rightarrow I_{T_1}^{(1)}= I_{0}^{(2)}\rightarrow I_{T_2}^{(2)}$.

 Given a paired input image $I_{in}$ and target image $I_0^{(1)}$, the noise diffusion stage perturbs $I_0$ by progressively injecting Gaussian noise as: 
\vspace{-1mm}
\begin{align}
I^{(1)}_{t} &= I^{(1)}_{t-1} + \beta_{t} \varepsilon_{t-1} = I^{(1)}_0 + \bar\beta_{t}\,\varepsilon, \label{eq:2}
\end{align}
where $\varepsilon_{t-1},..., \varepsilon \sim \mathcal N(0,\mathbf I)$, 
and $\beta_{t}$ is the noise coefficient schedule that controls the noise diffusion speed ($\bar\beta_{t}\ = \sqrt{\sum\nolimits_{i=1}^{t}\beta_i^2}$). $I^{(1)}_{t}$ denotes the image at the forward diffusion step $t$ in the noise diffusion stage. This injected noise serves as "domain harmonizer" as well as utilizes the original manifold lifting ability. We then pass this terminal state of diffusion stage to the residual diffusion stage as the initial state by setting
\(I^{(2)}_0 := I^{(1)}_{T_1}\) (see Fig.~\ref{fig:main_fig}). 
We define residual as the difference between the source and target images: $I_{res} = I_{in} - I_{0}$.
The subsequent stage, residual diffusion, models the deterministic target-to-source process by injecting the residual $I_{res}$: 
\vspace{-3mm}
\begin{align}
I^{(2)}_{t} &= I^{(2)}_{t-1} + \alpha_{t} I_{\text{res}} 
= I^{(2)}_0 + \bar\alpha_{t} I_{\text{res}} 
\label{eq:3}
\end{align}
\noindent where $\alpha_{t}$ is the residual schedule coefficient ($\bar\alpha_{t}=\sum\nolimits_{i=1}^{t}\alpha_{i}\ $) and $I^{(2)}_{t}$ denotes the image at the forward diffusion step $t$ in the residual diffusion stage. When $t=T_2$ (total steps of residual diffusion) and $\bar\alpha_{T_2} = 1$, $I^{(2)}_{T_2}$ yields the final stage, completing the forward process:
\begin{small} 
\begin{align}
I^{(2)}_{T_2} = I^{(2)}_{0} + I_{\text{res}} 
= I^{(1)}_0 + \bar\beta_{T_1}\varepsilon + I_{\text{res}} = I_{in} + \bar\beta_{T_1}\varepsilon.
\label{eq:4}
\end{align}
\end{small}

\vspace{-1mm}
\subsection{Decoupled Reverse Process}\label{Sec:3.3}
Correspondingly, the reverse process is decoupled into two independent stages: residual-removal and denoising. Each stage is managed by a network trained with distinct objectives. As shown in Fig.~\ref{fig:main_fig}, the reverse process starts from noise-carrying image $I_{T_2}^{(2)}$, where DRDD first performs residual removal exclusively within the noise-carrying domain and obtains $I_{0}^{(2)}$. Then it conducts a
denoising process to transform noise-carrying target $I_{T_1}^{(1)}$ into clean target $I_{0}^{(1)}$. 
This entire reverse process is visualized in Fig.~\ref{fig:main_fig} through ``Reverse Process": $I_{T_2}^{(2)}\rightarrow I_{0}^{(2)}= I_{T_1}^{(1)}\rightarrow I_{0}^{(1)}$.

\begin{algorithm}[t]
\setstretch{0.48} \caption{Training Algorithm} \label{alg:algorithm1} \KwIn{Target image: $I_0$; Input image: $I_{in}$ \\ \quad\quad\quad Residual image: $I_{res} = I_{in} - I_{0}$.} 
\Repeat{\textnormal{converged}}{ $t \sim Uni({1,...T_1}), \epsilon \sim \mathcal{N}(\bm{0},\bm{I}) , I_{t}^{(1)} =I_0+\overline{\beta}_{t}\epsilon$; \\ Take gradient descent step on
\vspace{-1mm}
\[
\nabla_{\theta} = \| \epsilon - {\epsilon}_{\theta}(I_{t}^{(1)},t) \|_1
    \]
}
\Repeat{\textnormal{converged}}{ $t \sim Uni({1,...T_2}), I_t^{(2)} = I_{T_1}^{(1)}+\overline{\alpha}_{t}I_{res}$; \\ Take gradient descent step on  \\
\vspace{-1mm}
\[
    \nabla_{\theta} \| I_{\text{res}} - I_{\text{res}}^{\theta}(I_t^{(2)}, I_{\text{in}}, t) \|_1
    \]
}

\end{algorithm} 

In the residual-removal stage ($I_{T_2}^{(2)}\rightarrow I_{0}^{(2)}$), DRDD aims to remove residuals from $I^{(2)}_{T_2}$, which involves estimation of the residuals injected during the forward process, as described in Eq.~\ref{eq:3}.
To this end, we train a residual removal network denoted as $ I^{\theta}_{res}(I_{t}^{(2)},t,I_{in})$. Given the current image $I_{t}^{(2)}$, timestep $t$ and the degraded image $I_{in}$, the network learns to predict residuals in a noise-carrying domain. 
 Using Eq.~\ref{eq:3}, we obtain the estimated target images $I_0^{(2)}(\theta)=I_t^{(2)}-\bar{\alpha}_t I_{res}^\theta$. Given $I_0^{(2)}(\theta)$ and $I_{res}^{\theta}$, the generation process of residual removal is defined as:
\vspace{-2mm}
\begin{align}
p_\theta(I_{t-1}^{(2)}\mid I_{t}^{(2)})
&:= q(I_{t-1}^{(2)}\mid I_{t}^{(2)}, I_0^{(2)}(\theta), I_{\text{res}}^\theta) \notag \\
&= \mathcal{N}\left(I_{t-1}^{(2)}; I_0^{(2)}(\theta) + \bar{\alpha}_{t-1} I_{\text{res}}^\theta,\, 0\right).
\label{eq:5}
\end{align}
\noindent With Eq.~\ref{eq:3} and \ref{eq:5}, $I_{t-1}$ can be sampled from $I_{t}$ via:
\vspace{-2mm}
\begin{align}
    I_{t-1}^{(2)}  = I_{t}^{(2)} - \alpha_{t} I_{res}^\theta(I_{t}^{(2)}, I_{in}, t).
    \label{eq:6}
\end{align}
By performing the core source-to-target mapping before any noise is removed, the domain-harmonizing and manifold lifting effects are preserved, thereby substantially simplifying the learning of a unified image mapping.

In the denoising stage ($I_{T_1}^{(1)}\rightarrow I_{0}^{(1)}$), we train a denoise network ${\epsilon}_{\theta}$ which learns to remove Gaussian noises. Using Eq.~\ref{eq:2}, we obtained the estimated target image $I_0^{(1)}(\theta) = I_{t}^{(1)} - \bar\beta_{t}\,\epsilon_{\theta}$. Given 
$I^{(1)}_{t}$ and model prediction of the noise ${\epsilon}_{\theta}$, we factor this variational distribution $q_\sigma\ $ as:
\vspace{-2mm}
\begin{align}
\scriptsize 
p_\theta\big(I^{(1)}_{t-1} \mid I^{(1)}_{t}\big)
&:= q_\sigma\big(I^{(1)}_{t-1} \mid I^{(1)}_t, I_0^{(1)}(\theta)\big) \nonumber \\
&\!\!\!\!\!\!\!\!\!\!\!\!\!\!\!\!\!\!\!\!\!\!\!\!= \mathcal{N} (I_{t-1};\sqrt{\bar{\beta }_{t-1}^2 -\sigma _t^2} \frac{(I^{(1)}_t - I_0^{(1)}(\theta))}{\bar{\beta }_{t}} ,\sigma _{t}^2\mathbf{I}
 ),
\label{eq:7}
\end{align}
\noindent where $\sigma _t^2=\eta \beta_t^2\bar{\beta} _{t-1}^2/\bar{\beta} _{t}^2$ and $\eta$ controls whether the generation process is random ($\eta=1$) or deterministic ($\eta=0$). With Eq.~\ref{eq:2} and \ref{eq:7}, the iterative process is (see Appendix~\ref{Appendix:DRDD}):
\vspace{-2mm}
\begin{small} 
\begin{align}
I_{t-1}^{(1)}= I_{t}^{(1)} -(\bar{\beta }_t-\sqrt{\bar{\beta }_{t-1}^2-\sigma _t^2} )\, 
{\epsilon}_{\theta}(I_{t}^{(1)}, t) +\sigma_t\varepsilon_t.\
\end{align}
\end{small} 
\noindent The complete sampling algorithm is shown in Alg. \ref{alg:algorithm2}. 

\begin{table*}[t]
    \centering
    \caption{\textit{Performance comparisons of five unified multi-task image restoration tasks on All-in-One-5 dataset \citep{instructIR}.} Denoising results are reported at the noise level $\sigma$ = 25. SSIM ($\uparrow$), LPIPS ($\downarrow$) and FID ($\downarrow$) are reported. Best results are highlighted in \textcolor{red}{\textbf{red}}, while the second-best results are \textcolor{blue}{blue}.  Diffusion-based methods are denoted by ``*''. Our DRDD demonstrates superior or competitive performance compared to recent models, especially in perceptual metrics. Due to space limitation, PSNR results and computational costs are provided in Appendix~\ref{app:cost}}.
    \vspace{-2mm}
    \resizebox{\textwidth}{!}{
    \begin{tabular}{l|c|c|c|c|c|c}
        \toprule[0.15em]
        \multirow{2}{*}{\textbf{Method}} &
        \textbf{Low-Light} &
        \textbf{Deraining} &
        \textbf{Denoising} &
        \textbf{Deblurring} &
        \textbf{Dehazing} &
        \textbf{Average} \\
        \cmidrule(lr){2-2}\cmidrule(lr){3-3}\cmidrule(lr){4-4}\cmidrule(lr){5-5}\cmidrule(l){6-6}\cmidrule(lr){7-7}
        & 
        SSIM / LPIPS / FID &
        SSIM / LPIPS / FID &
        SSIM / LPIPS / FID &
        SSIM / LPIPS / FID &
        SSIM / LPIPS / FID &
        SSIM / LPIPS / FID \\
        \midrule[0.1em]

        DA-CLIP* \citep{luo2024daclip}&
        0.819 / \textcolor{blue}{0.115} / \textcolor{blue}{36.2} &
        0.973 / 0.169 / 9.25 &
        0.809 / 0.108 / \textcolor{blue}{34.4} &
        0.829 / \textcolor{blue}{0.135} / \textcolor{blue}{16.2} &
        0.959 / 0.015 / \textcolor{blue}{3.82} &
        0.876 / 0.108 / \textcolor{blue}{20.0} \\
        \midrule[0.05em]
        DiffuIR* \citep{zheng2024diffuir}&
        0.804 / 0.204 / 77.7 &
        0.961 / 0.042 / 18.3 &
        0.856 / 0.114 / 34.9 &
        0.793 / 0.182 / 24.1 &
        0.930 / 0.046 / 13.6 &
        0.869 / 0.117 / 33.7 \\
        \midrule[0.05em]
        AdAIR \citep{cui2025adair} &
        \textcolor{blue}{0.844} /  0.120 /  48.9 &
        \textcolor{blue}{0.978} / 0.015 / 8.73 &
        0.888 / 0.109 / 39.2 &
        0.857 / 0.189 / 19.9 &
        0.975 / 0.015 / 14.5 &
        0.909 / 0.089 / 26.1 \\
        \midrule[0.05em]
        VLUNet \citep{vlunet} &
        0.832 / 0.144 / 60.4 &
        \textcolor{red}{\textbf{0.981}} / \textcolor{red}{\textbf{0.012}} / \textcolor{red}{\textbf{6.17}} &
        \textcolor{blue}{0.890} / 0.098 / 36.4 &
        0.840 / 0.214 / 24.0 &
        \textcolor{red}{\textbf{0.979}} / \textcolor{red}{\textbf{0.012}} / 12.9 &
        0.904 / 0.096 / 27.9 \\
        \midrule[0.05em]
        DFPIR \citep{tian2025dfpir} &
        0.843 / 0.122 / 50.6 &
        0.977 / 0.017 / 8.32 &
        0.889 / \textcolor{red}{0.091} / 35.0 &
        \textcolor{blue}{0.873} / 0.164 / 17.0 &
        \textcolor{blue}{0.978} / \textcolor{blue}{0.013} / 13.6 &
        \textcolor{blue}{0.912} / \textcolor{blue}{0.081} / 24.9 \\
        \midrule[0.1em]
        \textbf{DRDD(Ours)*} &
      \textcolor{red}{\textbf{0.864}} / \textcolor{red}{\textbf{0.103}} / \textcolor{red}{\textbf{35.4}} &
        \textcolor{blue}{0.978} / \textcolor{blue}{0.014} / \textcolor{blue}{8.06} &
        \textcolor{red}{\textbf{0.893}} / \textcolor{blue}{0.097} / \textcolor{red}{\textbf{28.9}} &
        \textcolor{red}{\textbf{0.881}} / \textcolor{red}{\textbf{0.134}} / \textcolor{red}{\textbf{15.8}} &
        0.972 / \textcolor{red}{\textbf{0.012}} / \textcolor{red}{\textbf{3.54}} &
        \textcolor{red}{\textbf{0.916}} / \textcolor{red}{\textbf{0.073}} / \textcolor{red}{\textbf{18.3}} \\
        \bottomrule[0.1em]
    \end{tabular}
    }
    \vspace{-2mm}
    \label{tab:AllinOne5}
\end{table*}

\begin{algorithm}[t]
\setstretch{0.51}
\caption{Sampling Algorithm}
\label{alg:algorithm2}
\KwIn{Input image: $I_{in}$. \\}
$\epsilon \sim \mathcal{N}(\bm{0}, \bm{I})$; \\
$I_{T_2}^{(2)} = I_{in} + \overline{\beta}_{T_1} \epsilon$; \\
\For {$t=T_2,\dots,1 $}{
    \quad $I_{t-1}^{(2)} = I_{t}^{(2)} - \alpha_t I_{res}^{\theta}(I_{t}^{(2)}, I_{in}, t)$; \\
}
$I_{T_1}^{(1)} = I_{0}^{(2)}$; \\
\For{$t = T_1, \dots, 1$} {
    $I_{t-1}^{(1)} = I_t^{(1)} - (\bar{\beta}_t - \sqrt{\bar{\beta}_{t-1}^2 - \sigma_t^2}) \cdot \epsilon_{\theta}(I_t^{(1)}, t) + \sigma_t \varepsilon_t$;
}

\Return {$I_0^{(1)}$}
\end{algorithm}

\paragraph{Training Objectives.} 
We derive the following simplified loss function for training the residual removal network $I_{res}^{\theta}$ and the denoising network $\epsilon_\theta$ (see proofs in Appendix~\ref{Appendix:loss}):
\begin{align}
\mathcal{L}_{\text{res}}(\theta) 
&= \mathbb{E} \left[ \left\| I_{\text{res}} -  I_{\text{res}}^\theta(I_t^{(2)},t,I_{in}) \right\|_1 \right].\\
\mathcal{L}_{\epsilon}(\theta) &= \mathbb{E} \left[ \left\| \epsilon - {\epsilon}_{\theta}(I_t^{(1)}, t) \right\|_1 \right].
\label{eq:10}
\end{align}


\noindent The complete training algorithm is shown in Alg.~\ref{alg:algorithm1}. According to Eq.~\ref{eq:10}, since denoising network is trained solely on clean images with no need of corresponding source domain images, DRDD significantly enhances data efficiency. Besides, we can initialize the denoising network with weights pretrained on large scale natural image datasets. Although our derivation builds upon the DDPM \citep{NEURIPS2020_DDPM} and DDIM \citep{song2022ddim} framework, it is also compatible with score-based SDE \citep{song2021scorebased} methods. The detailed derivation is in Appendix~\ref{App:sde}.



\section{Experiments}
\label{others}
In the experiments, we first validate DRDD's effectiveness in unified I2I translation through two challenging settings: (1) multi-task benchmarks (All-in-One-5 \citep{instructIR} and CDD-11 \citep{onerestore}), where a single model handles multiple distinct restoration tasks; and (2) cross-domain single I2I translation task using our self-collected MNMD benchmark, where images from disparate domains within a single task. Next, we show that DRDD's advantages extend to standard single I2I translation tasks, where its domain harmonization mitigates instance-level distribution shifts. We then conduct data efficiency analyses, verifying DRDD's robust performance with limited paired data on both task-specific and unified settings. Further, we demonstrate DDRD's compatibility with other diffusion backbones. Finally, we theoretically and empirically investigate the optimal noise injection level for DRDD.

\subsection{Experiment Settings}
\textbf{Datasets.} We use a diverse set of widely-used datasets across various I2I translation tasks. For unified image restoration, we experiment on All-in-One-5~\citep{instructIR} and CDD-11~\citep{onerestore}, which includes various restoration tasks. We also construct an MNMD benchmark, which contains various types of noise and covers multiple domains. We use All-in-One-3~\citep{instructIR} and Low-Light~\citep{lol} to validate our data efficiency. Studies on inpainting (CelebA-HQ \citep{celebahq}), super-resolution (FFHQ \citep{ffhq}) and other task-specific I2I translation further demonstrate our framework's capability on single task I2I translation. The configuration of the aforementioned datasets is detailed in Appendix~\ref{app:more_datasets}.


\noindent\textbf{Model Architecture.} 
Following design in ADM~\citep{dhariwal2021diffusion}, the denoising network in DRDD, DiffUIR~\citep{zheng2024diffuir} and RDDM~\citep{liu2024rddm} adopt the same UNet backbone and hyperparameter settings, where the channel depth is $C = 128$ with channel multiplier = $(1, 1, 2, 2, 4, 4)$. Our denoising model follows the U-Net architecture in~\citep{dhariwal2021diffusion}, where the channel depth is $C = 64$ with channel multiplier = $(1, 2, 4, 8)$.

As for inference, we adopt the DDIM\citep{song2022ddim} sampling strategy as described by \citep{song2022ddim}, with the sampling step size set to 2 for both the denoising and residual removal stages in all experiments. We provide inference steps and corresponding model performance in Appendix~\ref{app:sampling_steps}.
\begin{figure}[ht]
\centering
\includegraphics[width=0.90\linewidth]{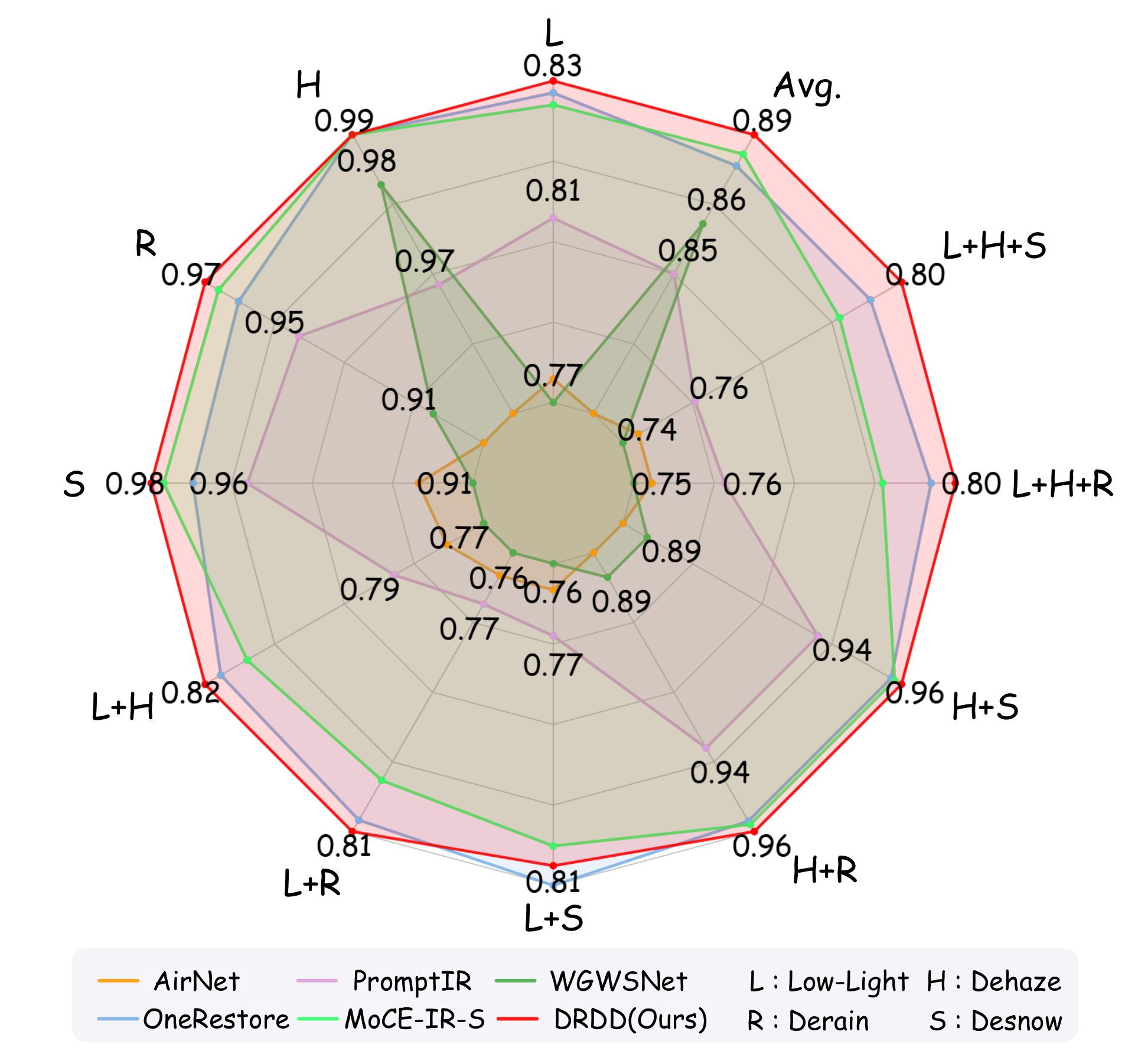}
\caption{\textit{Quantitative comparison to state-of-the-art on 11 degradation tasks and their average.} SSIM ($\uparrow$) is reported. Our DRDD method consistently outperforms recent SOTA models, with favorable results in complex composited degradation scenarios. All experiments are conducted on the CDD11 dataset \citep{onerestore}.}
\label{fig:CDD11}
\vspace{-1em}
\end{figure}

\vspace{-0.5em}
\begin{figure*}[h]
    \centering
    \includegraphics[width=0.99\linewidth]{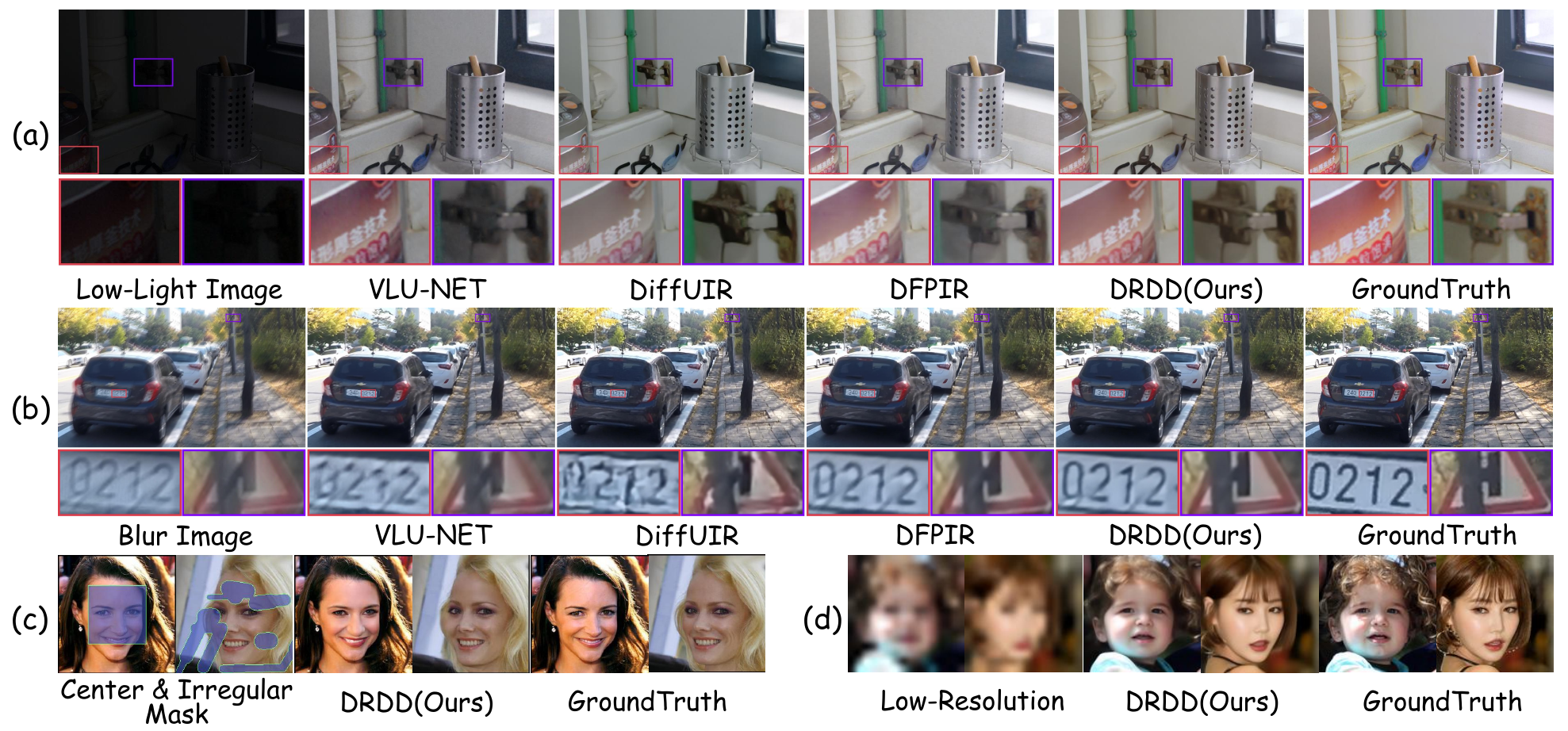}  
    \vspace{-2mm}
    \caption{\textit{Visual results of state-of-the-art methods and our proposed DRDD.} (a) Comparison of low-light enhancement results on the LoLV1 dataset \citep{lol}. (b) Comparison of blur restoration results on the GoPro dataset~\citep{gopro}. (c) Face inpainting results (center and irregular mask) in CelebA-HQ~\citep{celebahq}. (d) Super-Resolution result in FFHQ~\citep{ffhq}. Zoom in for best view. More visual results are provided in Appendix~\ref{Appendix:d}.} 
    \label{fig:main_result}   
\end{figure*}

\subsection{Performance on I2I Translation Tasks}
To comprehensively evaluate the unified capability of DRDD on I2I translation tasks, we design experiments from three perspectives: handling multiple restoration tasks, performing a single task across multiple domains, and conducting task-specific I2I within a single domain.
\vspace{-4mm}
\paragraph{Multi-task Unified Restoration.} Following recent studies \citep{tian2025dfpir, vlunet}, we evaluate the effectiveness of our method in handling various image degradation types on the \textbf{All-in-One-5} benchmark (see Appendix.~\ref{app:AiO5-dataset}). As shown in Tab.~\ref{tab:AllinOne5}, DRDD achieves state-of-the-art (SOTA) performance on most image restoration tasks. Notably, DRDD consistently outperforms both recent diffusion-based approaches (DA-CLIP \citep{luo2024daclip}, DiffuIR \citep{dcfdiff2025}) and other non-diffusion based (DFPIR \citep{tian2025dfpir}, VLUNet \citep{vlunet}, and AdAIR \citep{cui2025adair}) in all three metrics. This advantage is also clearly reflected in Fig.~\ref{fig:main_result}-a and Fig.~\ref{fig:main_result}-b, where DRDD produces restorations with richer details and fewer artifacts compared to other methods. 

The \textbf{CDD-11} dataset \citep{onerestore}, which is used to assess robustness and generalization under complex scenarios, contains 11 different degradation types (Appendix.~\ref{app:CDD11}). Fig.~\ref{fig:CDD11} presents a comparison between DRDD and five recent approaches \citep{potlapalli2023promptir, airnet, wgwsnet, onerestore, MOCEIRS} in CDD-11. DRDD consistently outperforms recent SOTA models across most degradation categories, achieving the highest average SSIM score. In particular, DRDD demonstrates clear advantages in challenging composite scenarios (e.g., the L+H+S and L+H+R in Fig.~\ref{fig:CDD11}), where other methods such as PromptIR \citep{potlapalli2023promptir}, WGWSNet \citep{wgwsnet}, and MoCE-IR-S \citep{MOCEIRS} experience notable performance drops. These results demonstrate the superiority of the DRDD framework on unified image restoration benchmarks.

\begin{table}[t]
    \centering
    \caption{\textit{Performance comparison of several methods on MNMD dataset}. Best results are highlighted in \textbf{Bold}.}
    \vspace{-2mm}
    \resizebox{\linewidth}{!}{
    \begin{tabular}{l*{8}{c}}
        \toprule[0.15em]
        \multirow{2}{*}{\textbf{Method}} &
        \multicolumn{2}{c}{\textbf{Natural}} &
        \multicolumn{2}{c}{\textbf{Medical}} &
        \multicolumn{2}{c}{\textbf{Remote}} &
        \multicolumn{2}{c}{\textbf{Average}} \\
        \cmidrule(lr){2-3}\cmidrule(lr){4-5}\cmidrule(lr){6-7}\cmidrule(lr){8-9}
        & SSIM$\uparrow$ & LPIPS$\downarrow$
        & SSIM$\uparrow$ & LPIPS$\downarrow$
        & SSIM$\uparrow$ & LPIPS$\downarrow$
        & SSIM$\uparrow$ & LPIPS$\downarrow$ \\
        \midrule[0.1em]
        RDDM \citep{liu2024rddm}   & 0.8333 & 0.1703 & 0.8343 & 0.1917 & 0.8542 & 0.1485 & 0.8406 & 0.1702 \\
        \midrule[0.1em]
        IR-SDE \citep{luo2023IRSDE}  & 0.8062 & 0.1129 & 0.8492 & 0.0510 & 0.8090 & 0.0999 & 0.8215 & 0.0879 \\
        \midrule[0.1em]
        VLUNET \citep{vlunet} & 0.9308 & 0.0782 & 0.9267 & 0.0840 & 0.9249 & 0.0710 & 0.9274 & 0.0784 \\
        \midrule[0.1em]
        \textbf{DRDD(Ours)} & \textbf{0.9391} & \textbf{0.0492} & \textbf{0.9324} & \textbf{0.0629} & \textbf{0.9300} & \textbf{0.0539} & \textbf{0.9338} & \textbf{0.0553}  \\
        \bottomrule[0.1em]
    \end{tabular}
    }
    \vspace{-2mm}
    \label{tab:MDMT}
\end{table}

\vspace{-4mm}
\paragraph{Single Task I2I Translation in Multi-Domain.} Recent All-in-One models often neglect performance drops from domain shifts. We believe that residual removal in the noise-carrying domain reduces degradation conflicts, while serving as a domain harmonizer that aligns feature representations across domains and mitigates domain gap. Therefore, we focus on a multi-domain image denoising task and establish a challenging multi-domain benchmark by adding different kinds of noise to natural (WED+BSD400 \citep{rain100}), remote sensing (UC-Merced \citep{ucmerced}), and medical (BrainWeb \citep{brainweb1, brainweb2, brainweb3}) image datasets. As shown in Tab.~\ref{tab:MDMT}, we compare our proposed DRDD model with several SOTA restoration methods, including RDDM \citep{liu2024rddm}, IR-SDE \citep{luo2023IRSDE}, and VLUNET \citep{vlunet}. Across all domains, our model consistently achieves the highest SSIM and lowest LPIPS on natural, medical, and remote sensing images. This demonstrates that DRDD effectively handles multi-domain image restoration within a single task. The complete benchmark construction pipeline is detailed in the Appendix~\ref{app:MNMD-dataset}.



\vspace{-2mm}
\paragraph{Single-Task I2I Translation in Single Domain.} 
Distribution gaps can occur even within a single I2I translation task in one domain due to diverse input characteristics.
Here, we further verify the effectiveness of DRDD on single-task I2I translation problems, where the data is sourced from a single domain. Specifically, we conduct experiments on image inpainting, super-resolution, and other single tasks, such as image deraining and low-light enhancement. For image inpainting,  DRDD is compared with CTSDG~\citep{guo2021image}, MISF~\citep{li2022misf}, and TransRef~\citep{liu2025transref} on the CelebA-HQ~\citep{celebahq} dataset under various mask patterns and resolutions. See Fig.~\ref{fig:main_result}-c and Appendix~\ref{app:add_ex} for all the quantitative and qualitative comparisons. These extensive results demonstrate that DRDD also achieves superior performance on single I2I translation tasks. 


\vspace{-1mm}
\subsection{Performance on Limited Training Data}
We now validate another key advantage of our decoupled design: its ability to achieve superior performance with limited paired data. 
We randomly sub-sample the training set to {75\%, 50\% and 25\%} while keeping the validation set fixed. As shown in Fig.~\ref{fig:data_efficiency}, on two representative datasets (Low-Light and All-in-One-3, see details in Appendix~\ref{app:All-in-One-3}), DRDD achieves better performance than existing baselines, especially under limited data conditions. Here we initialize the training of denoising network with pretrained weights on ImageNet~\citep{imagenet}. Notably, as the amount of training data decreases, the relative performance drop of DRDD remains substantially smaller compared to other methods, underscoring the data efficiency of our approach. These results demonstrate that DRDD can effectively maintain restoration quality even with severely reduced training data, highlighting its practical superiority in data-constrained cases.


\begin{figure}
    \centering
    \begin{subfigure}[t]{0.95\linewidth}
        \centering
        \includegraphics[width=\linewidth]{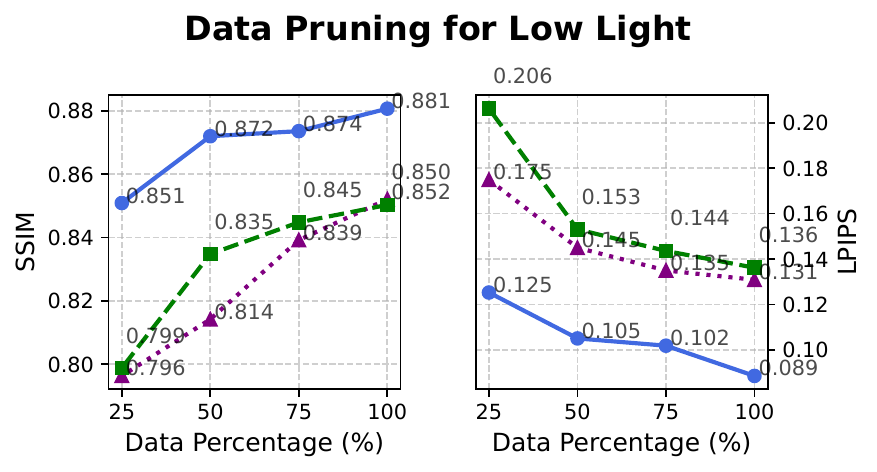}
        \label{fig:main_result_a}
    \end{subfigure}
    \vspace{-1em}
    \begin{subfigure}[t]{0.95\linewidth}
        \centering
        \includegraphics[width=\linewidth]{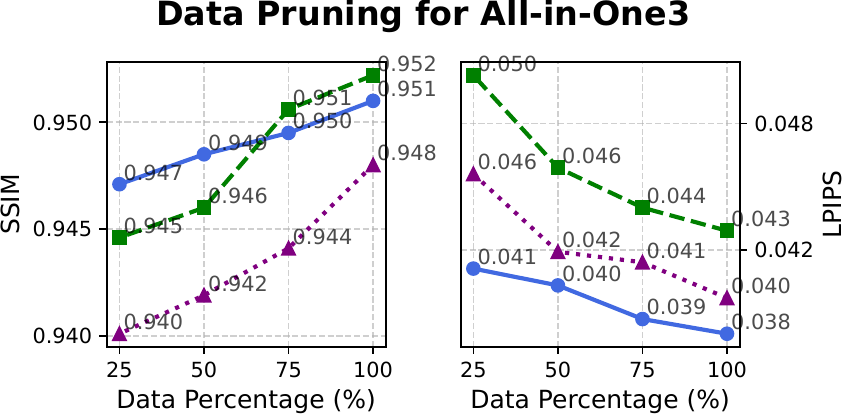}
        \label{fig:main_result_b}
        \vspace{-1em}
    \end{subfigure}
    \begin{subfigure}[t]{0.80\linewidth}
        \centering
        \includegraphics[width=\linewidth]{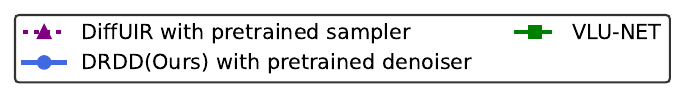}
        \label{fig:main_result_c}
    \end{subfigure}
    \vspace{-2em}
    \caption{\textit{Data Pruning on All-in-One-3 \citep{instructIR} and Low-Light dataset.} SSIM ($\uparrow$) and LPIPS ($\downarrow$) are reported. As the training data decreases, DRDD's performance drop is much smaller than other methods.}
    \vspace{-1em}
    \label{fig:data_efficiency}
\end{figure}

\subsection{Extensions to other Diffusion Paradigms}
We further incorporate the decoupling strategy into a SDE-based diffusion model built upon IR-SDE~\citep{luo2023IRSDE} to ensure the compatibility of our framework.
As shown in Tab.~\ref{tab:5}, the decoupled SDE consistently outperforms the baseline SDE on two important tasks: deraining and inpainting. In addition, it achieves comparable performance in denoising while obtaining better FID scores. 
These results indicate that the decoupling mechanism can be effectively extended to other diffusion frameworks.

\subsection{Investigation of Noise Injection Level}
The reverse process of DRDD starts from a noise-carrying image, where a Gaussian noise with a predefined and fixed noise level is injected. The intensity of such Gaussian noise is critical to overall performance (see Proposition \ref{proposition:noise_reduct_KL}). Here, we explore this issue from both theoretical and experimental perspectives. We define two distinct distances:
\begin{equation}
A(\sigma) = \Delta(P^\sigma_s, P^\sigma_t), \quad B(\sigma) = \Delta(P^\sigma_s, P_s)
\end{equation}
where \(A(\sigma)\) quantifies the distance between noise-carrying target distribution \(P_\sigma^t\) and the noise-carrying source distribution \(P_\sigma^s\) while \(B(\sigma)\) quantifies the distance between the noise-carrying source distribution \(P_\sigma^s\) and the original source distribution \(P_s\).
\(\Delta\) denotes the Maximum Mean Discrepancy (MMD) in this case, and the full formula is provided in the Appendix~\ref{Appendix:Noise-injection}. As the noise level \( \sigma \) increases, both \( A(\sigma) \) and \( B(\sigma) \) increase monotonically. We aim to find a noise level that \( A(\sigma) \) is small enough (since a larger distance complicates translation), and \( B(\sigma) \) is also minimized (to prevent significant input corruption), which leads to:
\begin{equation}
J(\sigma; \lambda) = \lambda \, \widetilde{A}(\sigma) + (1 - \lambda) \, \widetilde{B}(\sigma), \quad \lambda \in [0, 1].
\label{eq:12}
\end{equation}
where \(\widetilde{A}(\sigma)\) and \(\widetilde{B}(\sigma)\) represent the normalized values of \(A(\sigma)\) and \(B(\sigma)\), respectively. By minimizing this objective function, we obtain the optimal noise level \(\sigma^{\star}_J = \arg\min_J(\sigma; \lambda)\). The value of \(\lambda\) can be adjusted based on the desired balance between the two distances. We calculate the results on the All-in-One-5 datasets, taking \(\lambda=0.5\), and the obtained optimal $\bar{\beta}$ (noise intensity) is around 1.1 to 1.2. 
To further validate our Eq.~\ref{eq:12}, we conduct quantitative experiments across different noise levels. As shown in Fig.~\ref{fig:noise_scale}, the models achieve optimal performance when the noise intensity is set to 1.0, with stable and superior results observed in the range of 0.8 to 1.3. These findings align with our theoretical expectations.

\begin{table}[t]
    \centering
    \caption{\textit{Performance comparison of decoupled and coupled SDE-based diffusion methods on single task I2I.} Results are evaluated on the CelebA-HQ \citep{celebahq}, Rain100 \citep{rain100}, and BSD400 \citep{bsd400} datasets.}
    \vspace{-2mm}
    \resizebox{\linewidth}{!}{
    \begin{tabular}{lccc ccc cc}
        \toprule[0.15em]
        \multirow{2}{*}{\textbf{Method}} &
        \multicolumn{2}{c}{\textbf{Inpainting}} &
        \multicolumn{3}{c}{\textbf{Deraining}} &
        \multicolumn{3}{c}{\textbf{Denoise} }\\
        \cmidrule(lr){2-3}\cmidrule(lr){4-6}\cmidrule(lr){7-9}
        &
        LPIPS$\downarrow$ &
        FID$\downarrow$ &
        PSNR$\uparrow$ &
        SSIM$\uparrow$ &
        LPIPS$\downarrow$ &
        SSIM$\uparrow$ &
        LPIPS$\downarrow$ &
        FID$\downarrow$ \\
        \midrule[0.1em]

        IR-SDE~\citep{luo2023IRSDE} &
        {0.0517} & {15.14} & {27.2} & {0.856} & {0.083} & \textbf{0.833} & \textbf{0.1014} & {33.29}\\
        \midrule[0.1em]
        \textbf{De-IRSDE(Ours)} &
        \textbf{0.0490} & \textbf{15.10} & \textbf{28.1} & \textbf{0.862} & \textbf{0.076} & {0.827} & {0.1069} & \textbf{31.87} \\
        \bottomrule[0.1em]
    \end{tabular}
    }
    \vspace{-1.5em}
    \label{tab:5}
\end{table}

\begin{figure}[ht]
\centering
\includegraphics[width=0.99\linewidth]{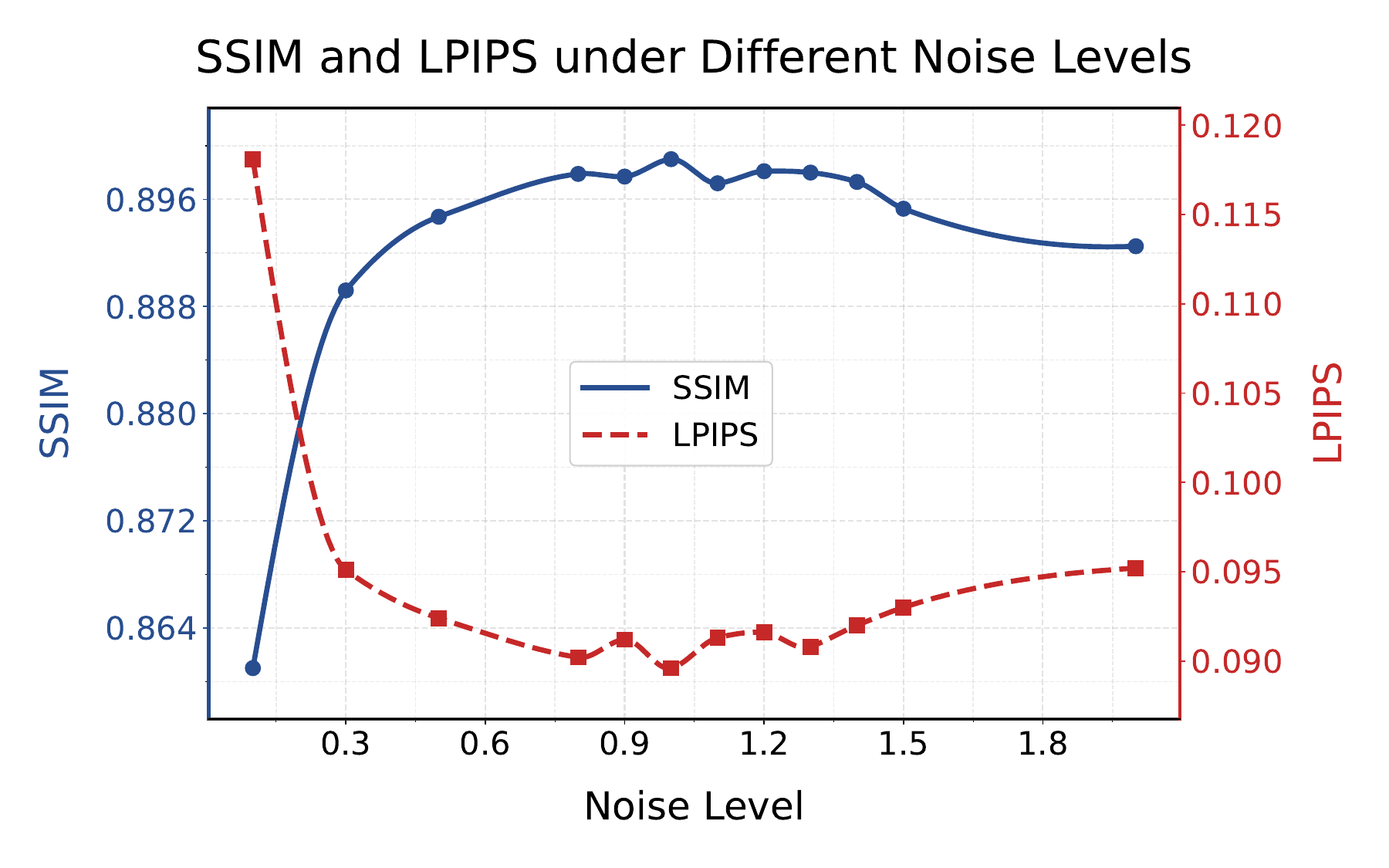}
\vspace{-5mm}
\caption{\textit{Performance comparison on the All-in-One-5 dataset\citep{instructIR} under varying noise injection level.}}
\label{fig:noise_scale}
\end{figure}

\vspace{-5mm}
\section{Conclusion}
In this paper, we present DRDD, a novel diffusion model that decouples standard diffusion into sequential noise diffusion and residual diffusion stages. Our work discovers a novel role of Gaussian noise as a ``domain harmonizer", which leads us to rethink conventional coupled diffusion models and propose a novel decoupled framework. By leveraging its decoupled mechanism, DRDD not only simplifies the learning of a unified mapping across tasks but also enables the denoising stage to be trained exclusively on unpaired images. Comprehensive theoretical and empirical analyses demonstrate DRDD's effectiveness. We believe this work opens new perspectives on noise utilization in generative models and provides a solid foundation for unified image translation systems.



\section{Acknowledgements}
This work was supported by National Natural Science Foundation of China (62306253, T2522030), Early Career Scheme from the Research Grants Council of Hong Kong SAR (27207025, 27204623), Guangdong Natural Science Fund-General Programme (2024A1515010233), China Postdoctoral Science Foundation under Grant Number 2025M781669, and the Fundamental Research Project of SIA (2025JC1K05).

{
    \small
    \bibliographystyle{ieeenat_fullname}
    \bibliography{main}

\begin{thebibliography}{73}
\providecommand{\natexlab}[1]{#1}
\providecommand{\url}[1]{\texttt{#1}}
\expandafter\ifx\csname urlstyle\endcsname\relax
  \providecommand{\doi}[1]{doi: #1}\else
  \providecommand{\doi}{doi: \begingroup \urlstyle{rm}\Url}\fi

\bibitem[Abdelhamed et~al.(2018)Abdelhamed, Lin, and Brown]{abdelhamed2018high}
Abdelrahman Abdelhamed, Stephen Lin, and Michael~S Brown.
\newblock A high-quality denoising dataset for smartphone cameras.
\newblock In \emph{Proceedings of the IEEE conference on computer vision and pattern recognition}, pages 1692--1700, 2018.

\bibitem[Arbelaez et~al.(2010)Arbelaez, Maire, Fowlkes, and Malik]{bsd400}
Pablo Arbelaez, Michael Maire, Charless Fowlkes, and Jitendra Malik.
\newblock Contour detection and hierarchical image segmentation.
\newblock \emph{IEEE transactions on pattern analysis and machine intelligence}, 33\penalty0 (5):\penalty0 898--916, 2010.

\bibitem[Bansal et~al.(2022)Bansal, Borgnia, Chu, Li, Kazemi, Huang, Goldblum, Geiping, and Goldstein]{bansal2022cold}
Arpit Bansal, Eitan Borgnia, Hong-Min Chu, Jie~S Li, Hamid Kazemi, Furong Huang, Micah Goldblum, Jonas Geiping, and Tom Goldstein.
\newblock Cold diffusion: Inverting arbitrary image transforms without noise.
\newblock \emph{arXiv preprint arXiv:2208.09392}, 2022.

\bibitem[Bishop and Nasrabadi(2006)]{bishop2006pattern}
Christopher~M Bishop and Nasser~M Nasrabadi.
\newblock \emph{Pattern recognition and machine learning}.
\newblock Springer, 2006.

\bibitem[Chen et~al.(2019)Chen, Xu, Yang, Song, and Tao]{gatedgan}
Xinyuan Chen, Chang Xu, Xiaokang Yang, Li Song, and Dacheng Tao.
\newblock Gated-gan: Adversarial gated networks for multi-collection style transfer.
\newblock \emph{IEEE Transactions on Image Processing}, 28\penalty0 (2):\penalty0 546–560, 2019.

\bibitem[Choi et~al.(2021)Choi, Kim, Jeong, Gwon, and Yoon]{choi2021ilvrconditioningmethoddenoising}
Jooyoung Choi, Sungwon Kim, Yonghyun Jeong, Youngjune Gwon, and Sungroh Yoon.
\newblock Ilvr: Conditioning method for denoising diffusion probabilistic models, 2021.

\bibitem[Collins et~al.(2002)Collins, Zijdenbos, Kollokian, Sled, Kabani, Holmes, and Evans]{brainweb2}
D~Louis Collins, Alex~P Zijdenbos, Vasken Kollokian, John~G Sled, Noor~Jehan Kabani, Colin~J Holmes, and Alan~C Evans.
\newblock Design and construction of a realistic digital brain phantom.
\newblock \emph{IEEE transactions on medical imaging}, 17\penalty0 (3):\penalty0 463--468, 2002.

\bibitem[Conde et~al.(2024)Conde, Geigle, and Timofte]{instructIR}
Marcos~V Conde, Gregor Geigle, and Radu Timofte.
\newblock Instructir: High-quality image restoration following human instructions.
\newblock In \emph{Proceedings of the European Conference on Computer Vision (ECCV)}, 2024.

\bibitem[Cui et~al.(2025)Cui, Zamir, Khan, Knoll, Shah, and Khan]{cui2025adair}
Yuning Cui, Syed~Waqas Zamir, Salman Khan, Alois Knoll, Mubarak Shah, and Fahad~Shahbaz Khan.
\newblock Ada{IR}: Adaptive all-in-one image restoration via frequency mining and modulation.
\newblock In \emph{The Thirteenth International Conference on Learning Representations}, 2025.

\bibitem[Delbracio and Milanfar(2024)]{indi}
Mauricio Delbracio and Peyman Milanfar.
\newblock Inversion by direct iteration: An alternative to denoising diffusion for image restoration, 2024.

\bibitem[Deng et~al.(2009)Deng, Dong, Socher, Li, Li, and Fei-Fei]{imagenet}
Jia Deng, Wei Dong, Richard Socher, Li-Jia Li, Kai Li, and Li Fei-Fei.
\newblock Imagenet: A large-scale hierarchical image database.
\newblock In \emph{2009 IEEE Conference on Computer Vision and Pattern Recognition}, pages 248--255, 2009.

\bibitem[Dhariwal and Nichol(2021)]{dhariwal2021diffusion}
Prafulla Dhariwal and Alexander Nichol.
\newblock Diffusion models beat gans on image synthesis.
\newblock \emph{Advances in neural information processing systems}, 34:\penalty0 8780--8794, 2021.

\bibitem[Goodfellow et~al.(2014)Goodfellow, Pouget-Abadie, Mirza, Xu, Warde-Farley, Ozair, Courville, and Bengio]{GAN}
Ian~J. Goodfellow, Jean Pouget-Abadie, Mehdi Mirza, Bing Xu, David Warde-Farley, Sherjil Ozair, Aaron Courville, and Yoshua Bengio.
\newblock Generative adversarial networks, 2014.

\bibitem[Guo et~al.(2021)Guo, Yang, and Huang]{guo2021image}
Xiefan Guo, Hongyu Yang, and Di Huang.
\newblock Image inpainting via conditional texture and structure dual generation.
\newblock In \emph{Proceedings of the IEEE/CVF international conference on computer vision}, pages 14134--14143, 2021.

\bibitem[Guo et~al.(2024)Guo, Gao, Lu, Zhu, Liu, and He]{onerestore}
Yu Guo, Yuan Gao, Yuxu Lu, Huilin Zhu, Ryan~Wen Liu, and Shengfeng He.
\newblock Onerestore: A universal restoration framework for composite degradation.
\newblock In \emph{European conference on computer vision}, pages 255--272. Springer, 2024.

\bibitem[Heusel et~al.(2017)Heusel, Ramsauer, Unterthiner, Nessler, and Hochreiter]{fid}
Martin Heusel, Hubert Ramsauer, Thomas Unterthiner, Bernhard Nessler, and Sepp Hochreiter.
\newblock Gans trained by a two time-scale update rule converge to a local nash equilibrium.
\newblock \emph{Advances in neural information processing systems}, 30, 2017.

\bibitem[Ho et~al.(2020)Ho, Jain, and Abbeel]{NEURIPS2020_DDPM}
Jonathan Ho, Ajay Jain, and Pieter Abbeel.
\newblock Denoising diffusion probabilistic models.
\newblock In \emph{Advances in Neural Information Processing Systems}, pages 6840--6851. Curran Associates, Inc., 2020.

\bibitem[Huang et~al.(2015)Huang, Singh, and Ahuja]{urban100}
Jia-Bin Huang, Abhishek Singh, and Narendra Ahuja.
\newblock Single image super-resolution from transformed self-exemplars.
\newblock In \emph{Proceedings of the IEEE Conference on Computer Vision and Pattern Recognition (CVPR)}, 2015.

\bibitem[Karras et~al.(2018)Karras, Aila, Laine, and Lehtinen]{celebahq}
Tero Karras, Timo Aila, Samuli Laine, and Jaakko Lehtinen.
\newblock Progressive growing of {GAN}s for improved quality, stability, and variation.
\newblock In \emph{International Conference on Learning Representations}, 2018.

\bibitem[Karras et~al.(2019)Karras, Laine, and Aila]{ffhq}
Tero Karras, Samuli Laine, and Timo Aila.
\newblock A style-based generator architecture for generative adversarial networks.
\newblock In \emph{Proceedings of the IEEE/CVF conference on computer vision and pattern recognition}, pages 4401--4410, 2019.

\bibitem[Kwan et~al.(1996)Kwan, Evans, and Pike]{brainweb1}
Remi K-S Kwan, Alan~C Evans, and G~Bruce Pike.
\newblock An extensible mri simulator for post-processing evaluation.
\newblock In \emph{International conference on visualization in biomedical computing}, pages 135--140. Springer, 1996.

\bibitem[Kwan et~al.(1999)Kwan, Evans, and Pike]{brainweb3}
RK-S Kwan, Alan~C Evans, and G~Bruce Pike.
\newblock Mri simulation-based evaluation of image-processing and classification methods.
\newblock \emph{IEEE transactions on medical imaging}, 18\penalty0 (11):\penalty0 1085--1097, 1999.

\bibitem[Lai et~al.(2025)Lai, Song, Kim, Mitsufuji, and Ermon]{lai2025principlesdiffusionmodels}
Chieh-Hsin Lai, Yang Song, Dongjun Kim, Yuki Mitsufuji, and Stefano Ermon.
\newblock The principles of diffusion models, 2025.

\bibitem[Ledig et~al.(2017)Ledig, Theis, Huszar, Caballero, Cunningham, Acosta, Aitken, Tejani, Totz, Wang, and Shi]{SRGan}
Christian Ledig, Lucas Theis, Ferenc Huszar, Jose Caballero, Andrew Cunningham, Alejandro Acosta, Andrew Aitken, Alykhan Tejani, Johannes Totz, Zehan Wang, and Wenzhe Shi.
\newblock Photo-realistic single image super-resolution using a generative adversarial network, 2017.

\bibitem[Lee et~al.(2013)Lee, Lee, and Kim]{lee2013contrast}
Chulwoo Lee, Chul Lee, and Chang-Su Kim.
\newblock Contrast enhancement based on layered difference representation of 2d histograms.
\newblock \emph{IEEE transactions on image processing}, 22\penalty0 (12):\penalty0 5372--5384, 2013.

\bibitem[Li et~al.(2019)Li, Ren, Fu, Tao, Feng, Zeng, and Wang]{SOTS}
Boyi Li, Wenqi Ren, Dengpan Fu, Dacheng Tao, Dan Feng, Wenjun Zeng, and Zhangyang Wang.
\newblock Benchmarking single-image dehazing and beyond.
\newblock \emph{IEEE Transactions on Image Processing}, 28\penalty0 (1):\penalty0 492--505, 2019.

\bibitem[Li et~al.(2022{\natexlab{a}})Li, Liu, Hu, Wu, Lv, and Peng]{airnet}
Boyun Li, Xiao Liu, Peng Hu, Zhongqin Wu, Jiancheng Lv, and Xi Peng.
\newblock All-in-one image restoration for unknown corruption.
\newblock In \emph{Proceedings of the IEEE/CVF conference on computer vision and pattern recognition}, pages 17452--17462, 2022{\natexlab{a}}.

\bibitem[Li et~al.(2025)Li, Chen, Dong, Tang, and Pan]{li2024foundir}
Hao Li, Xiang Chen, Jiangxin Dong, Jinhui Tang, and Jinshan Pan.
\newblock Foundir: Unleashing million-scale training data to advance foundation models for image restoration.
\newblock In \emph{ICCV}, 2025.

\bibitem[Li et~al.(2022{\natexlab{b}})Li, Guo, Lin, Li, Feng, and Wang]{li2022misf}
Xiaoguang Li, Qing Guo, Di Lin, Ping Li, Wei Feng, and Song Wang.
\newblock Misf: Multi-level interactive siamese filtering for high-fidelity image inpainting.
\newblock In \emph{Proceedings of the IEEE/CVF conference on computer vision and pattern recognition}, pages 1869--1878, 2022{\natexlab{b}}.

\bibitem[Liang et~al.(2021)Liang, Cao, Sun, Zhang, Van~Gool, and Timofte]{liang2021swinir}
Jingyun Liang, Jiezhang Cao, Guolei Sun, Kai Zhang, Luc Van~Gool, and Radu Timofte.
\newblock Swinir: Image restoration using swin transformer.
\newblock In \emph{Proceedings of the IEEE/CVF international conference on computer vision}, pages 1833--1844, 2021.

\bibitem[Lipman et~al.(2022)Lipman, Chen, Ben-Hamu, Nickel, and Le]{lipman2022flow}
Yaron Lipman, Ricky~TQ Chen, Heli Ben-Hamu, Maximilian Nickel, and Matt Le.
\newblock Flow matching for generative modeling.
\newblock \emph{arXiv preprint arXiv:2210.02747}, 2022.

\bibitem[Liu et~al.(2023{\natexlab{a}})Liu, Vahdat, Huang, Theodorou, Nie, and Anandkumar]{I2SB}
Guan-Horng Liu, Arash Vahdat, De-An Huang, Evangelos~A. Theodorou, Weili Nie, and Anima Anandkumar.
\newblock I$^2$sb: Image-to-image schr\"odinger bridge, 2023{\natexlab{a}}.

\bibitem[Liu et~al.(2021)Liu, Xu, Yang, Fan, and Huang]{VELOLL}
Jiaying Liu, Dejia Xu, Wenhan Yang, Minhao Fan, and Haofeng Huang.
\newblock Benchmarking low-light image enhancement and beyond.
\newblock \emph{International Journal of Computer Vision}, 129\penalty0 (4):\penalty0 1153--1184, 2021.

\bibitem[Liu et~al.(2024)Liu, Wang, Fan, Wang, Tang, and Qu]{liu2024rddm}
Jiawei Liu, Qiang Wang, Huijie Fan, Yinong Wang, Yandong Tang, and Liangqiong Qu.
\newblock Residual denoising diffusion models, 2024.

\bibitem[Liu et~al.(2025)Liu, Liao, Chen, Xiao, Wang, Lin, and Satoh]{liu2025transref}
Taorong Liu, Liang Liao, Delin Chen, Jing Xiao, Zheng Wang, Chia-Wen Lin, and Shin’ichi Satoh.
\newblock Transref: Multi-scale reference embedding transformer for reference-guided image inpainting.
\newblock \emph{Neurocomputing}, 632:\penalty0 129749, 2025.

\bibitem[Liu et~al.(2023{\natexlab{b}})Liu, Gong, and Liu]{liu2023flow}
Xingchao Liu, Chengyue Gong, and Qiang Liu.
\newblock Flow straight and fast: Learning to generate and transfer data with rectified flow.
\newblock In \emph{Proc. ICLR}, 2023{\natexlab{b}}.

\bibitem[Lugmayr et~al.(2022)Lugmayr, Danelljan, Romero, Yu, Timofte, and Gool]{lugmayr2022repaintinpaintingusingdenoising}
Andreas Lugmayr, Martin Danelljan, Andres Romero, Fisher Yu, Radu Timofte, and Luc~Van Gool.
\newblock Repaint: Inpainting using denoising diffusion probabilistic models, 2022.

\bibitem[Luo et~al.(2023)Luo, Gustafsson, Zhao, Sjölund, and Schön]{luo2023IRSDE}
Ziwei Luo, Fredrik~K. Gustafsson, Zheng Zhao, Jens Sjölund, and Thomas~B. Schön.
\newblock Image restoration with mean-reverting stochastic differential equations, 2023.

\bibitem[Luo et~al.(2024)Luo, Gustafsson, Zhao, Sjölund, and Schön]{luo2024daclip}
Ziwei Luo, Fredrik~K. Gustafsson, Zheng Zhao, Jens Sjölund, and Thomas~B. Schön.
\newblock Controlling vision-language models for multi-task image restoration, 2024.

\bibitem[Ma et~al.(2015)Ma, Zeng, and Wang]{ma2015perceptual}
Kede Ma, Kai Zeng, and Zhou Wang.
\newblock Perceptual quality assessment for multi-exposure image fusion.
\newblock \emph{IEEE Transactions on Image Processing}, 24\penalty0 (11):\penalty0 3345--3356, 2015.

\bibitem[Ma et~al.(2017)Ma, Duanmu, Wu, Wang, Yong, Li, and Zhang]{Ma2017WED}
Kede Ma, Zhengfang Duanmu, Qingbo Wu, Zhou Wang, Hongwei Yong, Hongliang Li, and Lei Zhang.
\newblock Waterloo exploration database: New challenges for image quality assessment models.
\newblock \emph{IEEE Transactions on Image Processing}, 26\penalty0 (2):\penalty0 1004--1016, 2017.

\bibitem[Meng et~al.(2022)Meng, He, Song, Song, Wu, Zhu, and Ermon]{meng2022sdeditguidedimagesynthesis}
Chenlin Meng, Yutong He, Yang Song, Jiaming Song, Jiajun Wu, Jun-Yan Zhu, and Stefano Ermon.
\newblock Sdedit: Guided image synthesis and editing with stochastic differential equations, 2022.

\bibitem[Mescheder et~al.(2018)Mescheder, Geiger, and Nowozin]{mescheder2018trainingmethodsgansactually}
Lars Mescheder, Andreas Geiger, and Sebastian Nowozin.
\newblock Which training methods for gans do actually converge?, 2018.

\bibitem[Nah et~al.(2017)Nah, Hyun~Kim, and Mu~Lee]{gopro}
Seungjun Nah, Tae Hyun~Kim, and Kyoung Mu~Lee.
\newblock Deep multi-scale convolutional neural network for dynamic scene deblurring.
\newblock In \emph{Proceedings of the IEEE conference on computer vision and pattern recognition}, pages 3883--3891, 2017.

\bibitem[Neumann et~al.(2019)Neumann, Pinto, Zhai, and Houlsby]{ucmerced}
Maxim Neumann, Andre~Susano Pinto, Xiaohua Zhai, and Neil Houlsby.
\newblock In-domain representation learning for remote sensing, 2019.

\bibitem[Ning et~al.(2025)Ning, Li, Su, Jia, Liu, Beneš, Chen, Salah, and Ertugrul]{dcfdiff2025}
Mang Ning, Mingxiao Li, Jianlin Su, Haozhe Jia, Lanmiao Liu, Martin Beneš, Wenshuo Chen, Albert~Ali Salah, and Itir~Onal Ertugrul.
\newblock Dctdiff: Intriguing properties of image generative modeling in the dct space, 2025.

\bibitem[Pang et~al.(2021)Pang, Lin, Qin, and Chen]{pang2021imagetoimagetranslationmethodsapplications}
Yingxue Pang, Jianxin Lin, Tao Qin, and Zhibo Chen.
\newblock Image-to-image translation: Methods and applications, 2021.

\bibitem[Potlapalli et~al.(2023)Potlapalli, Zamir, Khan, and Khan]{potlapalli2023promptir}
Vaishnav Potlapalli, Syed~Waqas Zamir, Salman Khan, and Fahad Khan.
\newblock Promptir: Prompting for all-in-one image restoration.
\newblock In \emph{Thirty-seventh Conference on Neural Information Processing Systems}, 2023.

\bibitem[Qian et~al.(2018)Qian, Tan, Yang, Su, and Liu]{rain100}
Rui Qian, Robby~T Tan, Wenhan Yang, Jiajun Su, and Jiaying Liu.
\newblock Attentive generative adversarial network for raindrop removal from a single image.
\newblock In \emph{Proceedings of the IEEE conference on computer vision and pattern recognition}, pages 2482--2491, 2018.

\bibitem[Rim et~al.(2020)Rim, Lee, Won, and Cho]{rim2020real}
Jaesung Rim, Haeyun Lee, Jucheol Won, and Sunghyun Cho.
\newblock Real-world blur dataset for learning and benchmarking deblurring algorithms.
\newblock In \emph{European conference on computer vision}, pages 184--201. Springer, 2020.

\bibitem[Ronneberger et~al.(2015)Ronneberger, Fischer, and Brox]{unet}
Olaf Ronneberger, Philipp Fischer, and Thomas Brox.
\newblock U-net: Convolutional networks for biomedical image segmentation.
\newblock In \emph{International Conference on Medical image computing and computer-assisted intervention}, pages 234--241. Springer, 2015.

\bibitem[Saharia et~al.(2021)Saharia, Ho, Chan, Salimans, Fleet, and Norouzi]{SR3}
Chitwan Saharia, Jonathan Ho, William Chan, Tim Salimans, David~J. Fleet, and Mohammad Norouzi.
\newblock Image super-resolution via iterative refinement, 2021.

\bibitem[Saharia et~al.(2022)Saharia, Chan, Chang, Lee, Ho, Salimans, Fleet, and Norouzi]{saharia2022palette}
Chitwan Saharia, William Chan, Huiwen Chang, Chris Lee, Jonathan Ho, Tim Salimans, David Fleet, and Mohammad Norouzi.
\newblock Palette: Image-to-image diffusion models.
\newblock In \emph{ACM SIGGRAPH 2022 conference proceedings}, pages 1--10, 2022.

\bibitem[Song et~al.(2020)Song, Meng, and Ermon]{song2022ddim}
Jiaming Song, Chenlin Meng, and Stefano Ermon.
\newblock Denoising diffusion implicit models.
\newblock \emph{arXiv preprint arXiv:2010.02502}, 2020.

\bibitem[Song and Ermon(2019)]{song2019generative}
Yang Song and Stefano Ermon.
\newblock Generative modeling by estimating gradients of the data distribution.
\newblock \emph{Advances in neural information processing systems}, 32, 2019.

\bibitem[Song et~al.(2021)Song, Sohl-Dickstein, Kingma, Kumar, Ermon, and Poole]{song2021scorebased}
Yang Song, Jascha Sohl-Dickstein, Diederik~P Kingma, Abhishek Kumar, Stefano Ermon, and Ben Poole.
\newblock Score-based generative modeling through stochastic differential equations.
\newblock In \emph{International Conference on Learning Representations}, 2021.

\bibitem[Tian et~al.(2025)Tian, Liao, Liu, Li, and Ren]{tian2025dfpir}
Xiangpeng Tian, Xiangyu Liao, Xiao Liu, Meng Li, and Chao Ren.
\newblock Degradation-aware feature perturbation for all-in-one image restoration.
\newblock In \emph{Proceedings of the Computer Vision and Pattern Recognition Conference}, pages 28165--28175, 2025.

\bibitem[Wang et~al.(2025)Wang, Zhang, Guo, Wang, Ma, and Du]{wang2025dgsolver}
Hebaixu Wang, Jing Zhang, Haonan Guo, Di Wang, Jiayi Ma, and Bo Du.
\newblock Dgsolver: Diffusion generalist solver with universal posterior sampling for image restoration.
\newblock \emph{arXiv preprint arXiv:2504.21487}, 2025.

\bibitem[Wang et~al.(2013)Wang, Zheng, Hu, and Li]{wang2013naturalness}
Shuhang Wang, Jin Zheng, Hai-Miao Hu, and Bo Li.
\newblock Naturalness preserved enhancement algorithm for non-uniform illumination images.
\newblock \emph{IEEE transactions on image processing}, 22\penalty0 (9):\penalty0 3538--3548, 2013.

\bibitem[Wang et~al.(2004)Wang, Bovik, Sheikh, and Simoncelli]{ssim}
Zhou Wang, Alan~C Bovik, Hamid~R Sheikh, and Eero~P Simoncelli.
\newblock Image quality assessment: from error visibility to structural similarity.
\newblock \emph{IEEE transactions on image processing}, 13\penalty0 (4):\penalty0 600--612, 2004.

\bibitem[Wei et~al.(2018)Wei, Wang, Yang, and Liu]{lol}
Chen Wei, Wenjing Wang, Wenhan Yang, and Jiaying Liu.
\newblock Deep retinex decomposition for low-light enhancement.
\newblock \emph{arXiv preprint arXiv:1808.04560}, 2018.

\bibitem[Whang et~al.(2022)Whang, Delbracio, Talebi, Saharia, Dimakis, and Milanfar]{whang2022deblurring}
Jay Whang, Mauricio Delbracio, Hossein Talebi, Chitwan Saharia, Alexandros~G Dimakis, and Peyman Milanfar.
\newblock Deblurring via stochastic refinement.
\newblock In \emph{Proc. CVPR}, pages 16293--16303, 2022.

\bibitem[Xie et~al.(2025)Xie, Liu, Lin, Fan, Han, Tang, and Qu]{xie2025arra}
Xing Xie, Jiawei Liu, Ziyue Lin, Huijie Fan, Zhi Han, Yandong Tang, and Liangqiong Qu.
\newblock Unleashing the potential of large language models for text-to-image generation through autoregressive representation alignment, 2025.

\bibitem[Yang et~al.(2017)Yang, Tan, Feng, Liu, Guo, and Yan]{yang2017deep}
Wenhan Yang, Robby~T Tan, Jiashi Feng, Jiaying Liu, Zongming Guo, and Shuicheng Yan.
\newblock Deep joint rain detection and removal from a single image.
\newblock In \emph{Proceedings of the IEEE conference on computer vision and pattern recognition}, pages 1357--1366, 2017.

\bibitem[Zamfir et~al.(2025)Zamfir, Wu, Mehta, Tan, Paudel, Zhang, and Timofte]{MOCEIRS}
Eduard Zamfir, Zongwei Wu, Nancy Mehta, Yuedong Tan, Danda~Pani Paudel, Yulun Zhang, and Radu Timofte.
\newblock Complexity experts are task-discriminative learners for any image restoration.
\newblock In \emph{Proceedings of the Computer Vision and Pattern Recognition Conference}, pages 12753--12763, 2025.

\bibitem[Zamir et~al.(2022)Zamir, Arora, Khan, Hayat, Khan, and Yang]{Zamir2021Restormer}
Syed~Waqas Zamir, Aditya Arora, Salman Khan, Munawar Hayat, Fahad~Shahbaz Khan, and Ming-Hsuan Yang.
\newblock Restormer: Efficient transformer for high-resolution image restoration.
\newblock In \emph{CVPR}, 2022.

\bibitem[Zeng et~al.(2025)Zeng, Wang, Chen, Su, and Liu]{vlunet}
Haijin Zeng, Xiangming Wang, Yongyong Chen, Jingyong Su, and Jie Liu.
\newblock Vision-language gradient descent-driven all-in-one deep unfolding networks.
\newblock In \emph{Proceedings of the Computer Vision and Pattern Recognition Conference}, pages 7524--7533, 2025.

\bibitem[Zhang et~al.(2018)Zhang, Isola, Efros, Shechtman, and Wang]{lpips}
Richard Zhang, Phillip Isola, Alexei~A Efros, Eli Shechtman, and Oliver Wang.
\newblock The unreasonable effectiveness of deep features as a perceptual metric.
\newblock In \emph{Proceedings of the IEEE conference on computer vision and pattern recognition}, pages 586--595, 2018.

\bibitem[Zheng et~al.(2024)Zheng, Wu, Yang, Zhang, Hu, and Zheng]{zheng2024diffuir}
Dian Zheng, Xiao-Ming Wu, Shuzhou Yang, Jian Zhang, Jian-Fang Hu, and Wei-Shi Zheng.
\newblock Selective hourglass mapping for universal image restoration based on diffusion model.
\newblock In \emph{Proceedings of the IEEE/CVF conference on computer vision and pattern recognition}, pages 25445--25455, 2024.

\bibitem[Zheng et~al.(2025)Zheng, Lin, Guo, Zhou, Wang, and Qu]{zheng2025fedvlmbench}
Weiying Zheng, Ziyue Lin, Pengxin Guo, Yuyin Zhou, Feifei Wang, and Liangqiong Qu.
\newblock Fedvlmbench: Benchmarking federated fine-tuning of vision-language models, 2025.

\bibitem[Zhu et~al.(2020)Zhu, Park, Isola, and Efros]{cyclegan}
Jun-Yan Zhu, Taesung Park, Phillip Isola, and Alexei~A. Efros.
\newblock Unpaired image-to-image translation using cycle-consistent adversarial networks, 2020.

\bibitem[Zhu et~al.(2023)Zhu, Wang, Fu, Yang, Guo, Dai, Qiao, and Hu]{wgwsnet}
Yurui Zhu, Tianyu Wang, Xueyang Fu, Xuanyu Yang, Xin Guo, Jifeng Dai, Yu Qiao, and Xiaowei Hu.
\newblock Learning weather-general and weather-specific features for image restoration under multiple adverse weather conditions.
\newblock In \emph{Proceedings of the IEEE/CVF conference on computer vision and pattern recognition}, pages 21747--21758, 2023.

\bibitem[Özdenizci and Legenstein(2023)]{weatherdiff}
Ozan Özdenizci and Robert Legenstein.
\newblock Restoring vision in adverse weather conditions with patch-based denoising diffusion models.
\newblock \emph{IEEE Transactions on Pattern Analysis and Machine Intelligence}, 45\penalty0 (8):\penalty0 10346--10357, 2023.

\end{thebibliography}
}

\clearpage
\setcounter{page}{1}
\maketitlesupplementary

\appendix

\section{Derivations and Proofs}

\subsection{Proofs of Proposition 3.1}
In this section, we give a proof to the aforementioned \textbf{proposition 3.1}.
\label{Appendix:proposition}
\renewcommand{\theproposition}{3.1}
 \begin{proposition}
    Let \( P \) and \( Q \) be two distinct probability distributions over a space \( \mathcal{X} \). Suppose that we inject Gaussian noise \( \mathcal{N}(0, \sigma^2) \) (with \( \sigma \neq 0 \)) to both distributions and denote \( P_\sigma \) and \( Q_\sigma \) as the resulting distributions. Then, the Kullback-Leibler (KL) divergence between \( P_\sigma \) and \( Q_\sigma \) is less than the KL divergence between \( P \) and \( Q \):
    \begin{align}
    D_{\text{KL}}(P_\sigma \parallel Q_\sigma) < D_{\text{KL}}(P \parallel Q)
\end{align}
\end{proposition}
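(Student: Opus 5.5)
The plan is to view noise injection as passing through a Markov kernel and then combine the data-processing inequality with the chain rule for KL divergence. Let $X\sim P$ (resp.\ $X\sim Q$) and set $Y=X+Z$ with $Z\sim\mathcal N(0,\sigma^2\mathbf I)$ independent of $X$. Let $K(\cdot\mid x)=\mathcal N(x,\sigma^2\mathbf I)$ denote this Gaussian kernel, with density $\varphi_\sigma(y-x)$. The joint laws are $P\otimes K$ and $Q\otimes K$, and their $Y$-marginals are exactly $P_\sigma=P*\mathcal N(0,\sigma^2)$ and $Q_\sigma=Q*\mathcal N(0,\sigma^2)$.

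The first step is to write the chain rule in both orders. Conditioning on $X$ first, the conditionals coincide (both are $K(\cdot\mid x)$), so
\begin{align}
D_{\text{KL}}(P\otimes K\parallel Q\otimes K)=D_{\text{KL}}(P\parallel Q).
\end{align}
Conditioning on $Y$ first gives
\begin{align}
D_{\text{KL}}(P\otimes K\parallel Q\otimes K)=D_{\text{KL}}(P_\sigma\parallel Q_\sigma)+\mathbb E_{y\sim P_\sigma}\big[D_{\text{KL}}(P_{X\mid Y=y}\parallel Q_{X\mid Y=y})\big].
\end{align}
Equating the two identities yields the non-strict inequality $D_{\text{KL}}(P_\sigma\parallel Q_\sigma)\le D_{\text{KL}}(P\parallel Q)$, with the gap equal to the expected KL divergence between the posteriors.

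Before handling strictness I would make the degenerate cases explicit. If $D_{\text{KL}}(P\parallel Q)=\infty$, the statement can only hold when $D_{\text{KL}}(P_\sigma\parallel Q_\sigma)<\infty$. Gaussian smoothing does not guarantee this in general; for example, $Q$ may have much lighter tails than $P$. I would therefore state the proposition under the standing assumption $D_{\text{KL}}(P\parallel Q)<\infty$, which in particular gives $P\ll Q$. Under this assumption the gap is finite and well defined, and strictness reduces to showing that it is positive.

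The main obstacle is proving strict positivity of the gap, i.e.\ ruling out $P_{X\mid Y=y}=Q_{X\mid Y=y}$ for $P_\sigma$-a.e.\ $y$. By Bayes' rule, the posteriors have the explicit form
\begin{align}
P_{X\mid Y=y}(\diff x)=\frac{\varphi_\sigma(y-x)\,P(\diff x)}{p_\sigma(y)},\qquad Q_{X\mid Y=y}(\diff x)=\frac{\varphi_\sigma(y-x)\,Q(\diff x)}{q_\sigma(y)},
\end{align}
where $p_\sigma,q_\sigma$ are the (strictly positive) densities of $P_\sigma,Q_\sigma$. Suppose these two measures coincide for even one $y$ in the support. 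Since $\varphi_\sigma(y-x)>0$ for every $x$, we can cancel it to get $P(\diff x)=\frac{p_\sigma(y)}{q_\sigma(y)}\,Q(\diff x)$ as measures. The factor $c=p_\sigma(y)/q_\sigma(y)$ is a constant in $x$, so $P=cQ$. Both are probability measures, hence $c=1$ and $P=Q$, contradicting distinctness. Consequently the posteriors differ for $P_\sigma$-a.e.\ $y$, each posterior KL term is strictly positive, and, since $P_\sigma$ has full support, the expectation is strictly positive. This gives the strict inequality in \eqref{eq:propo}.

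The delicate points in writing this out are the measure-theoretic justification of the disintegration (standard on $\mathbb R^d$), and the finiteness assumption above, which I expect will have to be added to the statement for the strict inequality to be correct.
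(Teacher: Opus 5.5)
Your proposal is correct and follows the paper's route. The joint KL of $(X,Y)$ equals $D_{\text{KL}}(P\parallel Q)$ because both joints use the same Gaussian kernel, and the chain rule in the other order splits it into $D_{\text{KL}}(P_\sigma\parallel Q_\sigma)$ plus the expected posterior KL. The paper only asserts that this posterior term is positive because $P\neq Q$; your Bayes-rule cancellation of the strictly positive factor $\varphi_\sigma(y-x)$ actually proves it. Your added hypothesis $D_{\text{KL}}(P\parallel Q)<\infty$ is genuinely necessary and is missing from the paper's statement: for a heavy-tailed $P$ against a Gaussian $Q$, both sides are $+\infty$ and the strict inequality fails.
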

\noindent \textbf{Proof:}
Let \( P \) and \( Q \) be two distinct probability distributions (\( P \neq Q \)) with corresponding probability density functions \( p(x) \) and \( q(x) \). The KL divergence is defined as:
\begin{align}
D_{KL}(P \parallel Q) = \int p(x) \log \left( \frac{p(x)}{q(x)} \right) dx
\end{align}
\noindent where the integral is taken over the entire support of \(x\).
To add Gaussian noise to each data distribution in a random manner, we consider a Gaussian kernel \( K(y|x) \), which represents the probability of output \(y\) given input \(x\). The Gaussian kernel is defined as:
\begin{align}
K(y|x) = \frac{1}{\sqrt{2 \pi \sigma^2}} \exp \left( -\frac{(y - x)^2}{2\sigma^2} \right),
\end{align}
where \( \sigma^2 \) is the variance of the noise. Adding Gaussian noise, the distributions for \( P \) and \( Q \) are modified to \( P_\sigma \) and \( Q_\sigma \), with the corresponding probability densities:
\begin{align}
p_\sigma(y) = \int K(y|x) p(x) dx, 
\end{align}
\vspace{-1.5em}
\begin{align}
q_\sigma(y) = \int K(y|x) q(x) dx.
\end{align}
According to the definition:
\begin{align}
D_{KL}(P_\sigma \parallel Q_\sigma) 
&= \int p_\sigma(y) \log \left( \frac{p_\sigma(y)}{q_\sigma(y)} \right) dy \notag \\
&= D_{KL}(P(Y) \parallel Q(Y))
\label{Eq:17}
\end{align}
\noindent Define the joint density \(p(x, y) = p(x) K(y|x)\) and \(q(x, y) = q(x) K(y|x)\). Note that due to Gaussian noise being independent, \( K(y|x) \) does not depend on \(p\) or \(q\), so \( p(x|y) = K(y|x) \) for both \( P \) and \( Q \).
Here, KL divergence \( D_{KL}(P \parallel Q) \) is related to \(x\), while \( D_{KL}(P_\sigma \parallel Q_\sigma) \) is related to \(y\). We can now write the joint KL divergence as:
\begin{align}
D_{KL}(P(X, Y) \parallel Q(X, Y)) = \int p(x, y) \log \left( \frac{p(x, y)}{q(x, y)} \right) dx dy
\end{align} \vspace{-1.5em}
Since
\begin{align}
\frac{p(x, y)}{q(x, y)} = \frac{p(x) K(y|x)}{q(x) K(y|x)} = \frac{p(x)}{q(x)}
\label{Eq:19}
\end{align}
\vspace{-1.5em}
\begin{align}
&\!\!\!\!\!\!\!\!\!\!D_{KL}(P(X, Y) \parallel Q(X, Y)) \notag \\
&= \iint p(x) K(y|x) \log \left( \frac{p(x)}{q(x)} \right) dx dy \quad \text{(Eq.~\ref{Eq:19})} \notag \\
&= \int p(x) \log \left( \frac{p(x)}{q(x)} \right) dx \quad \text{($\int K(y|x) dy = 1$)} \notag \\
&= D_{KL}(P \parallel Q)
\label{Eq:20}
\end{align}
The joint KL divergence can be decomposed as:
\begin{align}
&D_{KL}(P(X, Y) \parallel Q(X, Y)) =D_{KL}(P(Y) \parallel Q(Y)) \notag \\
&+ D_{KL}(P(X|Y) \parallel Q(X|Y) | P(Y))
\end{align}
where \( D_{KL}(P(Y) \parallel Q(Y)) \) is the KL divergence of the marginals \( P_\sigma \) and \( Q_\sigma \), i.e., \( D_{KL}(P_\sigma \parallel Q_\sigma) \). The second term \( D_{KL}(P(X|Y) \parallel Q(X|Y) | P(Y)) \) is positive (\( P \neq Q \)), hence
\vspace{-0.5em}
\begin{align}
\!\!\!\!\!D_{KL}(P(X, Y) \parallel Q(X, Y)) < D_{KL}(P(Y) \parallel Q(Y))
\label{Eq:22}
\end{align}
\vspace{-1em}
According to Eq.~\ref{Eq:17} and Eq.~\ref{Eq:20}, we transfer Eq.~\ref{Eq:22} to:
\begin{align}
D_{KL}(P_\sigma \parallel Q_\sigma) < D_{KL}(P \parallel Q)
\end{align}
This means that, for most cases, if \( P \neq Q \), adding Gaussian noise with $\sigma \neq 0$ will decrease the KL divergence between P and Q.

\subsection{Proofs in Our Method}
In this section, we give a detailed explanation to section 3.2, including detailed explanation of forward sampling process and proof of reverse sampling process.
\label{Appendix:DRDD}
\paragraph{Reverse Sampling steps of Residual-Removal Stage.}
Given Eq.~\ref{eq:3} and Eq.~\ref{eq:5}, we have:


\begin{align}
I_{t-1}^{(2)} &= I_0^\theta + \bar{\alpha}_{t-1} I_{\text{res}}^\theta + \sigma \epsilon_t \notag \\
&= (I_t^{(2)} - \bar{\alpha}_t I_{\text{res}}^\theta) + \bar{\alpha}_{t-1} I_{\text{res}}^\theta + \sigma \epsilon_t \notag  \\
&= I_t^{(2)} - (\bar{\alpha}_t - \bar{\alpha}_{t-1}) I_{\text{res}}^\theta + \sigma \epsilon_t \notag   \\
&= I_t^{(2)} - \bar{\alpha}_tI_{\text{res}}^\theta + \sigma   \epsilon_t \notag    \\
&= I_t^{(2)} - \alpha_tI_{\text{res}}^\theta \notag    \\
\end{align}

Finally, we have
\begin{align}
I_{t-1}^{(2)} &= I_t^{(2)} - \alpha_tI_{\text{res}}^\theta(I_{t}^{(2)}, I_{in}, t).  \qquad (Eq.~\ref{eq:6})
\end{align}


\paragraph{Reverse Sampling steps of Denoising Stage.} Given \begin{align}
I^{(1)}_{t} = I^{(1)}_0 + \bar\beta_{t}\,\varepsilon, \quad (Eq.~\ref{eq:2})
\end{align} and \begin{align}
\scriptsize 
p_\theta\big(I^{(1)}_{t-1} \mid I^{(1)}_{t}\big)
&:= q_\sigma\big(I^{(1)}_{t-1} \mid I^{(1)}_t, I_0^{(1)}(\theta)\big) \nonumber \quad (Eq.~\ref{eq:7}) \\
&\!\!\!\!\!\!\!\!\!\!\!\!\!\!\!\!\!\!\!\!\!\!\!\!\!\!\!\!\!\!\!\!\!\!\!= \mathcal{N} (I_{t-1};I_0^{(1)}(\theta)+ \sqrt{\bar{\beta }_{t-1}^2 -\sigma _t^2} \frac{(I^{(1)}_t - I_0^{(1)}(\theta))}{\bar{\beta }_{t}} ,\sigma _{t}^2\mathbf{I}
 ), 
\label{eq:28}
\end{align}
\vspace{-2em}

\begin{align}
& I_{t-1}^{(1)} \notag \\
&= I_0^{(1)}(\theta) + \sqrt{\bar{\beta}_{t-1}^{2} - \sigma_{t}^{2}}\;\frac{(I_{t}^{(1)}-I_0^{(1)}(\theta))}{\bar{\beta}_{t}} +\sigma_{t}\varepsilon_{t}\ \notag\\
&= (I_t^{1} - \bar{\beta}_t\epsilon_t ) + \sqrt{\bar{\beta}_{t-1}^{2} - \sigma_t^{2}}\;\frac{\,(I_t^{1}-(I_t^{1} - \bar{\beta}_t\epsilon_t ))\,}{\bar{\beta}_t} +\sigma_t\varepsilon_t\   \notag \\
&= (I_{t}^{(1)} - \bar{\beta}_{t} \epsilon_{t})
   + \epsilon_{t} \sqrt{\bar{\beta}_{t-1}^{2} - \sigma_{t}^{2}}
   + \sigma_{t} \varepsilon_{t} \notag \\
&=I_{t-1}^{(1)}= I_{t}^{(1)} -(\bar{\beta }_t-\sqrt{\bar{\beta }_{t-1}^2-\sigma _t^2} )\, 
{\epsilon}_{\theta}(I_{t}^{(1)}, t) +\sigma_t\varepsilon_t.\ 
\end{align}
\noindent where $\sigma _t^2=\eta \beta_t^2\bar{\beta} _{t-1}^2/\bar{\beta} _{t}^2$. When generation process is deterministic ($\eta=0$), we have:
\begin{align}
I_{t-1}^{(1)}
&= I_{t}^{(1)} - (\bar{\beta}_{t} - \bar{\beta}_{t-1})\, 
I_{\epsilon}^{\theta}\!\big(I_{t}^{(1)},\, t\big) 
\end{align}
\paragraph{Derivation of Eq.~\ref{eq:7} (Eq.~\ref{eq:28}).}
Similar to proof in RDDM~\citep{liu2024rddm} A.2, we have:
\begin{align}
   q(I_t|I_0) &= \mathcal{N} (I_{t};I_0, \bar{\beta}_t^2 \mathbf{I}).\label{Eq:31}
\end{align}
Similar to the evolution from DDPM~\citep{NEURIPS2020_DDPM} to DDIM~\citep{song2022ddim}, we can prove the statement with an induction argument for $t$ from $T$ to $1$. Assuming that Eq.~\ref{Eq:31} holds at $T$, we just need to verify $q(I_{t-1}|I_0)$ at $t-1$ from $q(I_t|I_0)$ at $t$ using Eq.~\ref{Eq:31}. Given:
\begin{align}
    & q(I_t|I_0) = \mathcal{N} (I_{t};I_0, \bar{\beta}_t^2 \mathbf{I}),\label{Eq:32}                                                \\
    & q_\sigma (I_{t-1}|I_t,I_0) = \mathcal{N} (I_{t-1}; I_0 + \sqrt{\bar{\beta}_{t-1}^2 - \sigma _t^2} \frac{(I_t - I_0)}{\bar{\beta}_t}, \sigma _t^2 \mathbf{I} ),\label{Eq:33} \\
    & q(I_{t-1}|I_0):=\mathcal{N}(\tilde{\mu}_{t-1}, \tilde{\Sigma}_{t-1})\label{Eq:34}
\end{align}
Similar to obtaining $p(y)$ from $p(x)$ and $p(y|x)$ using Eq.2.113-Eq.2.115 in~\citep{bishop2006pattern}, the values of $ \tilde{\mu}_{t-1}$ and $\tilde{\Sigma}_{t-1}$ are derived as follows:
\begin{align}
   \tilde{\mu}_{t-1} &= I_0 + \sqrt{\bar{\beta}_{t-1}^2 - \sigma _t^2} \frac{(I_0) - (I_0)}{\bar{\beta}_t} = I_0, \\
   \tilde{\Sigma} _{t-1} &= \sigma _t^2 \mathbf{I} + \left( \frac{\sqrt{\bar{\beta}_{t-1}^2 - \sigma _t^2}}{\bar{\beta}_t} \right)^2 \bar{\beta}_t^2 \mathbf{I} = \bar{\beta}_{t-1}^2 \mathbf{I}.
\end{align}
Therefore, $q(I_{t-1}|I_0) = \mathcal{N} (I_{t-1}; I_0, \bar{\beta}_{t-1}^2 \mathbf{I})$.
In fact, the case ($t = T$) already holds, thus Eq.~\ref{Eq:31} holds for all $t$. We can derive Eq.~\ref{eq:7} and Eq.~\ref{eq:28} from Eq.~\ref{Eq:33}.

\subsection{Derivation of Training Objectives}
\label{Appendix:loss}
In this section, we give a proof to the training objectives.
According to Eq.~\ref{eq:5}, we derive the training objective of residual-removal process as follows:
\begin{align} 
&{L_{res}}(\theta) \notag \\
&= D_{KL}\left(q(I_{t-1}^{(2)} \mid I_{t}^{(2)}, I_0^{(2)}(\theta), I_{\text{res}}^\theta)\right) \parallel p_\theta\left(I_{t-1}^{(2)} \mid I_{t}^{(2)}\right) \notag \\
&= \mathbb{E}\left[\left\Vert I_t - \alpha_t I_{res} - \left(I_t - \alpha_t I_{res}^{\theta}  \right)\right\Vert^2\right] \notag \\
&= \mathbb{E} \left[ \left\| I_{\text{res}} -  I_{\text{res}}^\theta(I_t^{(2)},t,I_{in}) \right\|_1 \right]. \quad (Eq.~\ref{eq:10})
\end{align}

\noindent According to Eq.~\ref{eq:7}, we derive the training objective of denoising process as follows:
\begin{align}
&{L_{\epsilon}}(\theta) \notag \\
&= D_{KL}\left(q(I_{t-1}^{(1)} \mid I_{t}^{(1)}, I_0^{(1)}(\theta)\right) \parallel p_\theta\left(I_{t-1}^{(1)} \mid I_{t}^{(1)}\right) \notag \\
&= \mathbb{E}\left[\left\Vert I_t - \frac{\beta^2_t}{\overline{\beta}_t}\epsilon - \left(I_t - \frac{\beta^2_t}{\overline{\beta}_t}\epsilon^{\theta} \right)\right\Vert^2\right] \notag \\
&= \mathbb{E} \left[ \left\| \epsilon - {\epsilon}_{\theta}(I_t^{(1)}, t) \right\|_1 \right]. \quad (Eq.~\ref{eq:10})
\end{align}

\subsection{Derivation of Decoupled SDE} This section introduces decoupling paradigm in SDE-based diffusion models and show the process of generating samples with reverse-time SDEs.
The forward process formula is as follows:
\label{App:sde}
\begin{equation}
	\diff {x} = \theta_t \, (\mu - {x}) \diff t + \sigma_t \diff w,
	\label{equ:sde}
\end{equation}
where $t$ denote the continuous time variable, w is a standard Wiener process, $\mu$ is the state mean, and $\theta_t, \sigma_t$ are time-dependent positive parameters that characterize the speed of the mean-reversion and the stochastic volatility, respectively. 
In \ref{equ:sde}, $\theta_t \, (\mu - {x}) \diff t$ is the drift term, which governs the evolutionary trend of the state, while $\sigma_t \diff w$ denotes the random noise disturbance affecting the state. We then reverse the SDE to derive an image restoration SDE. During the testing phase, only the score $\nabla_{{x}} \log p_t({x})$ needs to be predicted in this formula:

\begin{equation}
    \diff {x} = \big[ \theta_t \, (\mu - {x}) - \sigma_t^2 \, \nabla_{{x}} \log p_t({x}) \big] \diff t + \sigma_t \diff \hat{w}.
    \label{eq:reverse-irsde}
\end{equation}
We decouple the forward process into a two-stage procedure, adding noise and degradation-specific information respectively, as follows:
\begin{equation}
	\diff {x^{(1)}} = \sigma_t \diff w
	\label{equ:sde_noise}
\end{equation}
\begin{equation}
	\diff {x^{(2)}} = \theta_t \, (\mu - {x^{(2)}}) \diff t + \sigma_t \diff w,
	\label{equ:sde_res}
\end{equation}
Finally, we can reverse the SDE by predicting the score in Formula \ref{eq:reverse-irsde} for each of the two stages.  Given an initial state ${x}_0$, for any state ${x}_i$ at discrete time $i > 0$, the optimum residual reversing solution ${x}_{i-1}^{*}$ in (\ref{equ:sde}) is given by:
    \begin{equation}
    \begin{split}
        {x}_{i-1}^{(2)*} &= \frac{1 - e^{-2 \, \bar{\theta}_{i-1}}}{1 - e^{-2 \, \bar{\theta}_i}} e^{-\theta_i^{'}} ({x}^{(2)}_i - \mu) \\[.6em]
        &\quad+ \frac{1 - e^{-2 \, \theta_i^{'}}}{1 - e^{-2 \, \bar{\theta}_i}} e^{-\bar{\theta}_{i-1}} ({x}^{(2)}_0 - \mu) + \mu.
    \end{split}
    \end{equation}
For noise reversing ${x}_{i-1}^{(1)*}$ is given by:
    \begin{equation}
        \begin{split}
            {x}_{i-1}^{(1)*} &= \frac{1 - \mathrm{e}^{-2\bar{{\theta}}_{i-1}}}{1 - \mathrm{e}^{-2\bar{{\theta}}_{i}}} \mathrm{e}^{-{{\theta}}_{i}} ({x}^{(1)}_i - {x}^{(1)}_0) + {x}^{(1)}_0.
        \end{split}
        \label{app-eq:optimal_denoising_trajectory}
    \end{equation}

\subsection{Noise-Level Selection via Dual MMD Distances}
\label{Appendix:Noise-injection}

In this section, we give a detailed exploration of aforementioned Eq.~\ref{eq:12} in Section 4.5. 
\newcommand{\Dist}{\mathrm{Dist}} 
\begin{equation}
A(\sigma) = \Delta(P^\sigma_s, P^\sigma_t), \quad B(\sigma) = \Delta(P^\sigma_s, P_s)
\end{equation}
Here, \(A(\sigma)\) and \(B(\sigma)\) represent the measures of distance (in this case, based on MMD) between the distributions \(P^\sigma_s\) and \(P^\sigma_t\) for \(A(\sigma)\), and \(P^\sigma_s\) and \(P_s\) for \(B(\sigma)\), where:
- \(P^\sigma_s\) is the distribution of the source after adding Gaussian noise \( \mathcal{N}(0, \sigma^2) \) with \(\sigma\) being the noise strength parameter.
- \(P^\sigma_t\) and \(P_s\) are the target and reference distributions used for comparison.

\vspace{-2em}

\begin{align}
\Delta &= \mathbb{E}_{x,x' \sim P} \left[ \exp\left( -\frac{\|x - x'\|^2}{2\sigma^2} \right) \right] \notag \\
&\quad + \mathbb{E}_{y,y' \sim Q} \left[ \exp\left( -\frac{\|y - y'\|^2}{2\sigma^2} \right) \right] \notag \\
&\quad - 2 \mathbb{E}_{x \sim P, y \sim Q} \left[ \exp\left( -\frac{\|x - y\|^2}{2\sigma^2} \right) \right],
\end{align}

In the above equation, \(\Delta\) is the Maximum Mean Discrepancy between two distributions, \(P\) and \(Q\). Here, \(\sigma\) is a hyperparameter controlling the width of the Gaussian kernel, which affects the sensitivity of the kernel to the differences between samples from the distributions. The term \(\|x - x'\|^2\) is the squared Euclidean distance between two samples, and \(P\) and \(Q\) represent the two distributions being compared.
\vspace{-2em}
\begin{align}
\widehat{A}(\sigma) &= \frac{1}{R}\sum_{r=1}^R \widehat{A}_r(\sigma), \qquad
\widehat{B}(\sigma)  = \frac{1}{R}\sum_{r=1}^R \widehat{B}_r(\sigma).
\end{align}
In this equation, \(\widehat{A}(\sigma)\) and \(\widehat{B}(\sigma)\) are the average estimates of \(A(\sigma)\) and \(B(\sigma)\), computed over \(R\) independent samples. Each sample, \(\widehat{A}_r(\sigma)\) and \(\widehat{B}_r(\sigma)\), is calculated for the \(r\)-th sample from the dataset. Here, \(R\) represents the total number of samples used in the averaging process.
\begin{align}
\widetilde{A}(\sigma) &= 
\frac{\widehat{A}(\sigma) - \min_{\tau \in [0, \infty)} \widehat{A}(\tau)}
     {\max_{\tau \in [0, \infty)} \widehat{A}(\tau) - \min_{\tau \in [0, \infty)} \widehat{A}(\tau) + \epsilon}, \\
\widetilde{B}(\sigma) &=
\frac{\widehat{B}(\sigma) - \min_{\tau \in [0, \infty)} \widehat{B}(\tau)}
     {\max_{\tau \in [0, \infty)} \widehat{B}(\tau) - \min_{\tau \in [0, \infty)} \widehat{B}(\tau) + \epsilon}.
\end{align}
The equations above represent the normalized versions of \(\widehat{A}(\sigma)\) and \(\widehat{B}(\sigma)\), denoted as \(\widetilde{A}(\sigma)\) and \(\widetilde{B}(\sigma)\), respectively. The normalization is done to scale the values of \(\widehat{A}(\sigma)\) and \(\widehat{B}(\sigma)\) to a range between 0 and 1. The small constant \(\varepsilon\) is added to the denominator to avoid division by zero and ensure numerical stability.
\begin{align}
J(\sigma;\lambda) \;=\; \lambda\,\widetilde{A}(\sigma) \;+\; \bigl(1-\lambda\bigr)\,\widetilde{B}(\sigma),
\qquad \lambda \in [0,1].
\end{align}
The function \(J(\sigma; \lambda)\) is the weighted trade-off between \(\widetilde{A}(\sigma)\) and \(\widetilde{B}(\sigma)\), controlled by the parameter \(\lambda\). This trade-off allows us to balance the importance of the two terms, with \(\lambda\) taking values in the range \([0,1]\).
\begin{align}
\sigma^{\star}_{J} \;&=\; \arg\min_{\sigma \in [0, \infty)}\; J(\sigma;\lambda).
\end{align}
Finally, the optimal \(\sigma^{\star}_J\) is selected by minimizing the trade-off function \(J(\sigma;\lambda)\) over the range \( \sigma \in [0, \infty) \). The goal is to find the value of \(\sigma\) that minimizes the trade-off between \(\widetilde{A}(\sigma)\) and \(\widetilde{B}(\sigma)\), optimizing the performance based on the specific task.




\section{Experiment Settings and Dataset}
\label{app:more_datasets}
\begin{figure*}[ht]
\centering
\includegraphics[width=0.99\linewidth]{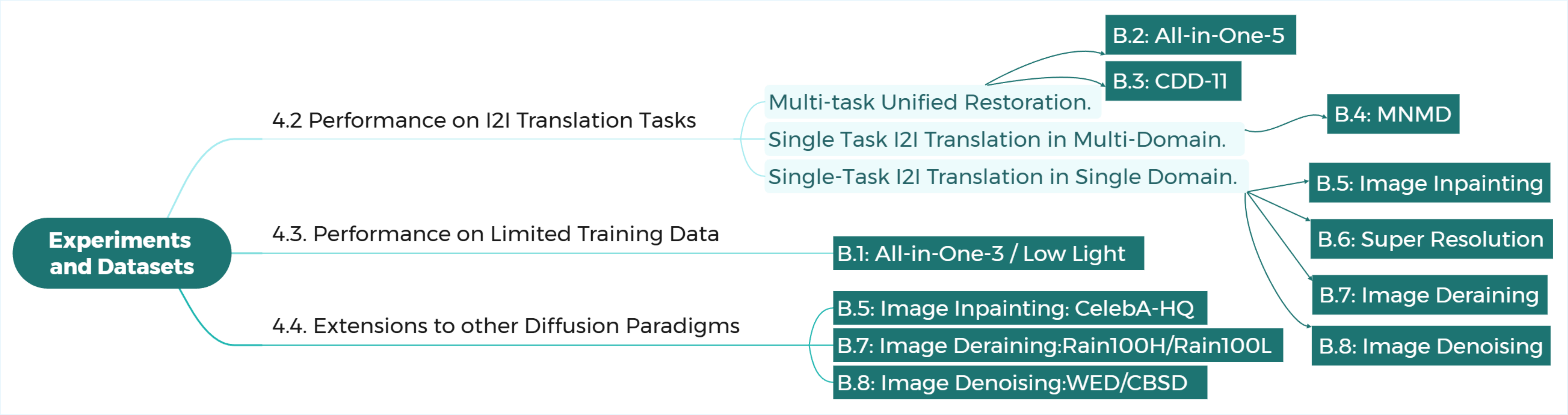}
\caption{\textit{Mindmap of experiments and corresponding datasets.}}
\label{fig:mindmap}
\vspace{-1em}
\end{figure*}
\begin{table}[t]
\centering
\begin{minipage}{0.50\textwidth}
\centering
\caption{\textit{Details for All-in-One image restoration benchmarks.}}
\label{tab:benchmark}
\resizebox{\linewidth}{!}{%
\begin{tabular}{l|ll|ll}
\toprule[0.1em]
\textbf{Task} & \textbf{Training Dataset} & \textbf{Size} & \textbf{Testing Dataset} & \textbf{Size} \\
\midrule[0.05em]



\multirow{3}{*}{All-in-One-3} & WED+BSD400 \citep{bsd400,Ma2017WED} & 5,144 &  CBSD68 \citep{bsd400} & 68\\
  & RESIDE-OTS \citep{SOTS} & 72,135 & SOTS \citep{SOTS} & 492\\
& Rain100L \citep{rain100} & 200 & Rain100L & 100 \\

\midrule[0.05em]
\multirow{5}{*}{All-in-One-5} & GoPro \citep{gopro} & 2,111 & GoPro & 1,111 \\
& WED+BSD400 \citep{bsd400,Ma2017WED} & 5,144 &  CBSD68 \citep{bsd400} & 68\\
   & LoLV1 \citep{lol} & 485 &  LoLV1 & 15\\
  & RESIDE-OTS \citep{SOTS} & 72,135 & SOTS \citep{SOTS} & 492\\
& Rain100L \citep{rain100} & 200 & Rain100L & 100 \\

\midrule[0.05em]
\multirow{3}{*}{MNMD} & UC-Merced \citep{ucmerced}  & 17,010 & UC-Merced & 630 \\
  & WED+BSD400 & 46,296 &  CBSD68 & 204\\
   &  BrainWeb \citep{brainweb1} & 4,689 &  BrainWeb-test & 174\\
\midrule[0.05em]
\multirow{1}{*}{CDD-11} & CDD-11 \citep{onerestore}  & 20,790 & CDD-11 & 2,310 \\
\midrule[0.05em]
\multirow{2}{*}{Low-Light} 
  & LoLV1 \citep{lol} & 485 & LoLV1 & 15\\
& LOL-VE \citep{rain100} & 400 & LOL-VE & 100 \\
\bottomrule[0.1em]
\end{tabular}%
}
\end{minipage}
\end{table}

\noindent\textbf{Experiment Settings.} All experiments are conducted on NVIDIA A6000 48GB GPUs. The network is optimized with L2 loss. Data augmentation includes random horizontal and vertical flips for all tasks and histogram equalization for low-light images. During training, $256\times256$ patches are randomly cropped from augmented images as network inputs. Unless otherwise specified, the batch size is set to 8, the initial learning rate is 8e-5, and the total number of training steps is 300{,}000. 
\\

\label{Appendix:model_architecture}
\noindent\textbf{Model Architecture.} Our residual removal model sets the basic U-Net~\citep{unet} architecture and the hyper-parameters are as follows: $C = 64$, channel multiplier = $(1, 2, 4, 8)$. Our denoising model follows the U-Net architecture in~\citep{dhariwal2021diffusion}, and the parameters are as follows $C = 128$, channel multiplier = $(1, 1, 2, 2, 4, 4)$. 
\\

\label{Appendix:ex setting}
\noindent\textbf{Datasets.} Overall dataset structures are displayed in Fig.~\ref{fig:mindmap}. The detailed dataset introduction are provided later from \ref{app:All-in-One-3} to \ref{Appendix:Denoising}.
\\

\noindent\textbf{Metrics.} To comprehensively evaluate restoration performance, we adopt both distortion-based metrics (PSNR, SSIM \citep{ssim}) and perceptual metrics (LPIPS \citep{lpips}, FID \citep{fid}). Distortion metrics assess pixel-level fidelity, while perceptual metrics compare feature space similarities to better reflect human visual perception. Following standard practice in All-in-One image restoration, all metrics are computed on the RGB channels of full-resolution images.

\subsection{All-in-One-3 \& Low Light}\label{app:All-in-One-3}
In this section, we introduce the datasets we used for experiment ``4.3.Performance on Limited Training Data".
\paragraph{All-in-One-3.} All-in-One-3 is a common setting for unified multiple image restoration tasks training, which contains ``Noise+Haze+Rain".  

\textbf{Image dehazing:}
For image dehazing, we use the outdoor synthesis dataset of RESIDE-OTS~\citep{SOTS} with 72,135 pairs for training and SOTS-Outdoor~\citep{SOTS} dataset for testing with 492 image pairs. These pairs are captured in real-world outdoor scenes and are specifically designed to benchmark the performance of dehazing algorithms under varying weather and lighting conditions, ensuring effective evaluation of dehazing performance across diverse environments.

\textbf{Image deraining:} we use the Rain100L~\citep{rain100} dataset with 200 pairs of images for training and 100 pairs for testing. Detailed introduction to Rain100L is provided in Appendix~\ref{Appendix:Deraining}.

\textbf{Image denoising}: We conduct training using a merged dataset of BSD400~\citep{bsd400} and WED~\citep{Ma2017WED} with 400 and 4,744 clear images, respectively. Noisy images are generated with Gaussian noise ($\sigma$ = (15, 25, 50)). Testing is performed on CBSD68~\citep{bsd400} datasets with 68 samples. Detailed introduction of BSD400~\citep{bsd400} and WED~\citep{Ma2017WED} is provided in Appendix~\ref{Appendix:Denoising}.

\paragraph{Low-Light.}
Following previous image restoration settings~\citep{wang2025dgsolver}, we combine data samples in LOLv1 and VE-LOL-L~\citep{VELOLL} as Low-Light benchmark. 
\textbf{VE-LOL-L}: This dataset is specifically designed to benchmark low-light image enhancement techniques. The VELOLL dataset consists of 400 synthetically paired images for training and 100 paired images for testing, with each pair consisting of a low-light image and its corresponding well-exposed reference image. 
\textbf{LOLv1}: See details in Appendix~\ref{app:lol}.

\subsection{All-in-One-5}
\label{app:AiO5-dataset}
All-in-One-5 contains datasets from the aforementioned three-task setting(All-in-One-3) as well as
additional datasets: GoPro~\citep{gopro} for motion deblurring, and LOLv1~\citep{lol} for lowlight image enhancement. The overall dataset contains ``Noise+Haze+Rain+Light+Blur" in total.

\textbf{Image Deblurring.} The GoPro~\citep{gopro} dataset is a standard benchmark for dynamic scene deblurring, created to enable supervised learning for blind deblurring models. The authors recorded high-frame-rate video and used the high-fps frames as sharp reference images. Instead of convolving with simple uniform kernels, they synthesized realistic, spatially varying motion blur by averaging consecutive sharp frames. The dataset release provides 2,111 blurry-sharp image pairs for training and 1,111 pairs for evaluation.

\textbf{Low Light Enhancement.} 
\label{app:lol}
LOLv1~\citep{lol} is a standard supervised benchmark for low-light enhancement. It consists of paired low-light and well-exposed reference images. The authors collected these pairs by capturing the same scenes under low and normal lighting conditions using a variety of consumer cameras and phones. This process yielded real captures (not purely synthetic) that include realistic sensor noise and color shifts, making the dataset a robust resource. We use 485 image pairs from LoLV1 for training and 15 pairs for testing.

\subsection{CDD-11}
\label{app:CDD11}
The CDD-11 (Composite Degradation Dataset) is a synthetic dataset designed for training and evaluating image restoration models under composite degradation conditions. CDD-11 was introduced in the OneRestore~\citep{onerestore}, from which highlights the importance of dealing with multiple degradation types simultaneously, such as low-light, haze, rain, and snow, rather than addressing each degradation type in isolation. The dataset consists of 11 different degradation conditions, including combinations like ``low\_haze," ``haze\_rain," ``low\_rain," and ``low\_snow," which containing 20,790 image pairs for training and 2,310 image pairs for testing in total. CDD-11 serves as a benchmark for evaluating models that aim to perform restoration under mixed degradation conditions, reflecting real-world scenarios more accurately than datasets with only single degradation types.

\subsection{Multi-Noise and Multi-Domain}
\label{app:MNMD-dataset}
In this section, we introduce a novel denoising benchmark designed for the single task I2I in multi-domains. We name this new benchmark \textbf{M}ulti-\textbf{N}oise and \textbf{M}ulti-\textbf{D}omain (MNMD). 
As shown in Tab.~\ref{tab:benchmark}, MNMD consists of image pairs from various sources, including 46,296 natural images (from WED and BSD400~\citep{rain100}), 17,010 remote sensing images (from UC-Merced~\citep{ucmerced}), and 4,689 medical images (from BrainWeb~\citep{brainweb1, brainweb2, brainweb3}). To better simulate noise-related degradations under different conditions, we introduce three types of noise, each with multiple intensity levels: \textbf{Gaussian noise} is added as $x' = x + n$, where $n \sim \mathcal{N}(0, \sigma^2)$. The parameter $\sigma =  (15, 25, 50) $ controls the standard deviation, reflecting the noise strength; \textbf{Salt-and-Pepper noise} is applied by randomly selecting a fraction $d$ (scale 0.014, 0.039, 0.154) of pixels and replacing each with either the minimum or maximum possible value, thereby simulating random pixel corruption; and \textbf{Poisson noise} is simulated by replacing each pixel with a value drawn from a Poisson distribution whose mean is $x \times \text{peak}$, with $x$ being the original pixel value and $\text{peak}$ (26, 102, 283) acting as a noise scaling factor. 

For evaluation, we use 204 noisy natural images from CBSD68~\citep{bsd400}, as well as 630 and 174 images from UC-Merced and BrainWeb, respectively. The test images are degraded with Gaussian noise ($\sigma$ = 15), salt-and-pepper noise (density = 0.039), and Poisson noise (scale = 102). This setup forms a comprehensive test set for evaluating denoising performance across multiple domains and noise types.

\textbf{The UC‑Merced Land Use Dataset (UC-Merced~\citep{ucmerced}} is a widely‑used remote sensing image dataset designed for land use scene classification, featuring 21 distinct categories of US urban and semi‑urban scenes. Each category contains 100 images, resulting in a total of 2,100 images in the dataset. The images have a resolution of 256×256 pixels and were manually extracted from the United States Geological Survey (USGS) National Map Urban Area Imagery collections for various US urban areas. 
The medical images are stem from \textbf{BrainWeb}~\citep{brainweb1, brainweb2, brainweb3}, which is a website for synthesizing medical images.
Detailed introduction of \textbf{BSD400}~\citep{bsd400} and \textbf{WED}~\citep{Ma2017WED} is provided in Appendix~\ref{Appendix:Denoising}.


\subsection{Image Inpainting}
The CelebA-HQ~\citep{celebahq} (CelebFaces High Quality) dataset is a high-resolution version of the CelebA dataset, specifically designed for training and evaluating face-related image processing tasks, including image inpainting. CelebA-HQ consists of 30,000 high-resolution facial images, each with a resolution of 256x256 pixels. The dataset includes a wide variety of celebrity faces with various attributes such as age, gender, and facial expressions, making it suitable for tasks like face generation and image inpainting. For image inpainting tasks, we adopt the CelebA-HQ dataset for both training and testing, with our training set containing 28,000 image pairs and our test set consisting of 2,000 image pairs. Each image in the dataset is paired with a mask that specifies the region to be inpainted, allowing models to learn to fill in missing parts of the face.

\subsection{Super-Resolution}
For super-resolution, we adopt the FFHQ \citep{ffhq} dataset for training and testing. FFHQ (Flickr-Faces-HQ) dataset is a high-quality facial image dataset consists of 70,000 samples, which is primarily used for computer vision and deep learning research, particularly in applications like facial image generation, editing, and expression recognition. Released by NVIDIA in 2018, the dataset aims to provide a high-resolution standard for facial image generation and recognition tasks. In our experiment, we input 16x16 images and output high-resolution 128x128 images. During training, we employ bicubic interpolation to upsample the 16x16 input images to 128x128, using the upsampled images as input for training.

\subsection{Deraining}
\label{Appendix:Deraining}
For deraining, we adopt the \textbf{Rain100} dataset \cite{yang2017deep} for both training and testing. Rain100 consists of two subsets, Rain100H (Heavy Rain) and Rain100L (Light Rain), each containing paired rainy and ground-truth clean images. The full dataset contains 2,200 image pairs: 300 for Rain100L and 1,900 for Rain100H. For our experiments, we use 100 image pairs from each subset for testing and the remaining pairs for training.


\subsection{Denoising}
\label{Appendix:Denoising}
For image denoising, we leverage two widely used datasets: BSD400~\citep{bsd400} and WED~\citep{Ma2017WED} for training, while CBSD68~\citep{bsd400} and Urban100~\citep{urban100} for testing. Our training set consists of 5,144 image pairs (400 from BSD400 and 4,744 from WED), while our testing set consists of 168 image pairs (68 from CBSD68 and 100 from Urban100).

\textbf{Training datasets.} We use BSD400 and WED to build our training set. BSD400 is a widely used dataset of 400 natural images, usually adopted as a training set in image-restoration research. The WED (Waterloo Exploration Database) is a much larger collection created for image-quality assessment, containing thousands of diverse, high-quality natural images. Combining WED with BSD400 provides a larger and more varied pool of clean images, which is essential for training models that can generalize well to diverse real-world noise. Many recent denoising pipelines~\citep{instructIR, Zamir2021Restormer, potlapalli2023promptir} merge these two datasets to expand their training data, and we follow this paradigm to utilize BSD400+WED as our training set.

\textbf{Testing datasets.} Our models are evaluated on two challenging test sets, CBSD68 and Urban100. CBSD68 is the color version of the BSD68 test set, a benchmark of 68 natural images commonly used to evaluate color-image denoising. Urban100 is a 100-image dataset of high-resolution urban scenes, initially developed for single-image super-resolution but now widely adopted as a challenging benchmark for denoising. Urban scenes often feature strong, repeating geometric patterns (like bricks and windows) that are difficult to reconstruct, making Urban100 a robust test for a model's ability to recover fine structural detail.

\section{Additional Experiments}
\subsection{Implementation Details of Methods in Experiments}
In this section, we introduce the training details of our comparison methods. 

\textbf{DRDD(Ours).} Without specific mentioned, training settings follow experiments setting details in Appendix~\ref{Appendix:ex setting}. In Experiment ``4.3.Performance on Limited Training Data". We start training denoising U-NET from pre-trained parameters, where $C = 256$, channel multiplier = $(1, 1, 2, 2, 4, 4)$. 

\textbf{RDDM.}~\citep{liu2024rddm} All experimental settings are kept consistent with those in the paper for both image deraining and image inpainting tasks.

\textbf{DiffuIR.} It~\citep{zheng2024diffuir} uses a selective hourglass mapping strategy within a diffusion model to handle multiple image restoration tasks with a single, efficient model. All experimental settings are kept consistent with those in the paper.

\textbf{AdAIR}~\citep{cui2025adair} adaptively restores images suffering from various degradations by mining and modulating frequency components, enabling unified and effective all-in-one image restoration. For training stage, it is conducted with a batch size of 32 in the all-in-one setting and a batch size of 8 in the single-task setting. The model is trained on cropped image patches of size 128 × 128 pixels for 150 epochs, which is approximately equivalent to 300,000 steps in DRDD. 

\textbf{DA-CLIP}~\citep{luo2024daclip} employs an image controller to identify degradation types and adaptively restore images affected by diverse distortions, thereby providing a unified and effective solution for multi-task image restoration. For training stage, we use a batch size of 16 in training for All-in-One-5 task. The model is trained for 300,000 steps on cropped image patches of size 256 × 256 pixels. 

\textbf{DFPIR.}~\citep{tian2025dfpir} It uses a unified model with a degradation-aware feature perturbation mechanism, which introduces channel-wise and attention-wise perturbations to mitigate task interference. We follow the experiment settings in the paper.
\begin{table*}[t]
    \centering
    \caption{Comparison of single task image restoration approaches on deraining and denoising with two metrics (SSIM / LPIPS). Best results are highlighted in \textbf{Bold}.}
    \vspace{-2mm}

    \begin{minipage}[t]{0.32\textwidth}
        \centering
        \caption*{(a) Deraining Results}
        \resizebox{\textwidth}{!}{
        \begin{tabular}{l|c|c}
            \toprule[0.15em]
            \multirow{2}{*}{\textbf{Method}} &
            \textbf{Rain100H} &
            \textbf{Rain100L} \\
            \cmidrule(lr){2-2}\cmidrule(l){3-3}
            & SSIM / LPIPS & SSIM / LPIPS \\
            \midrule[0.1em]
            RDDM* & .8806 / .1170 & .9432 / .0250 \\
            \midrule[0.05em]
            IRSDE* & .9041 / .0470 & .9805 / .0140 \\
            \midrule[0.05em]
            \textbf{DRDD (Ours)} & \textbf{.9375} / \textbf{.0400} & \textbf{.9839} / \textbf{.0180} \\
            \bottomrule[0.1em]
        \end{tabular}
        }
    \end{minipage}
    \hfill
    %
    \begin{minipage}[t]{0.66\textwidth}
        \centering
        \caption*{(b) Denoising on CBSD68 and Kodak24 with \(\sigma\in\{15,25,50\}\)}
        \resizebox{\textwidth}{!}{
        \begin{tabular}{l|ccc|ccc}
            \toprule[0.15em]
            \multirow{2}{*}{\textbf{Method}} &
            \multicolumn{3}{c|}{\textbf{CBSD68}} &
            \multicolumn{3}{c}{\textbf{Kodak24}} \\
            \cmidrule(lr){2-4}\cmidrule(l){5-7}
            & $\sigma{=}15$ & $\sigma{=}25$ & $\sigma{=}50$ & $\sigma{=}15$ & $\sigma{=}25$ & $\sigma{=}50$ \\
            & SSIM / LPIPS & SSIM / LPIPS & SSIM / LPIPS & SSIM / LPIPS & SSIM / LPIPS & SSIM / LPIPS \\
            \midrule[0.1em]
            AdAIR & .9340 / .0534 & .8898 / .0967 & .8003 / .1895 & .9269 / .0712 & .8841 / .1114 & .8036 / .1966 \\
            \midrule[0.05em]
            VLUNet & .9341 / .0568 & .8902 / .0984 & \textbf{.8039} / .1938 & .9270 / .0745 & .8838 / .1146 & \textbf{.8079} / .2040 \\
            \midrule[0.1em]
            \textbf{DRDD (Ours)} & \textbf{.9346} / \textbf{.0502} & \textbf{.8920} / \textbf{.0877} & .8013 / \textbf{.1750} & \textbf{.9274} / \textbf{.0680} & \textbf{.8879} / \textbf{.1032} & .8047 / \textbf{.1880} \\
            \bottomrule[0.1em]
        \end{tabular}
        }
    \end{minipage}

    \vspace{-2mm}
    \label{tab:single_task_two_metrics}
\end{table*}

\textbf{IR-SDE}~\citep{luo2023IRSDE} adaptively restores images suffering from various degradations by modeling the degradation process with mean-reverting stochastic differential equations, enabling unified image restoration.
For training stage, all tasks are trained with a batch size of 8 for a total of 500,000 steps. For the image inpainting task, the model is trained on cropped image patches of size 64 × 64 pixels, whereas for the other tasks, is trained on 128 × 128 pixels.

\textbf{De-IRSDE} is a decoupling version of IRSDE~\citep{luo2023IRSDE}. All experimental settings are kept consistent with those of IRSDE~\citep{luo2023IRSDE}.

\textbf{VLUNET}~\citep{vlunet} adaptively restores images suffering from various degradations by leveraging vision-language models to automatically select degradation-specific transforms, enabling unified and effective all-in-one image restoration within a deep unfolding framework. For training stage, we use a batch size of 8 in training for all tasks. The model is trained for 200 epochs on cropped image patches of size 128 × 128 pixels, which is roughly equivalent to 400,000 steps in DRDD. 

\textbf{TransRef.}~\citep{liu2025transref} 
For training stage, we use a batch size of 8 for training across all tasks. The model is trained on the original image pixels for 400,000 steps. 

\textbf{CTSDG.}~\citep{guo2021image} 
For training stage, we use a batch size of 4 for training across all tasks. The model is trained on cropped image patches of size 128 × 128 pixels for 150,000 steps.

\label{app:add_ex}
\begin{table}[t]
  \centering
  \caption{\textit{Performance comparison of inpainting methods at 256$\times$256 (center), 256$\times$256 (irregular), and 64$\times$64 (center) resolutions.} Best results and the second-best results are highlighted in \textcolor{red}{\textbf{red}} and \textcolor{blue}{blue}.} 
  \vspace{-2mm}
  \small
  \setlength{\tabcolsep}{4pt}
  \resizebox{\linewidth}{!}{%
    \begin{tabular}{lcccccc}
      \toprule[0.15em]
      \multirow{2}{*}{\textbf{Method}} &
      \multicolumn{2}{c}{\textbf{256$\times$256 (Center)}} &
      \multicolumn{2}{c}{\textbf{256$\times$256 (Irregular)}} &
      \multicolumn{2}{c}{\textbf{64$\times$64 (Center)}} \\
      \cmidrule(lr){2-3}\cmidrule(lr){4-5}\cmidrule(lr){6-7}
      & LPIPS$\downarrow$ & FID$\downarrow$
      & LPIPS$\downarrow$ & FID$\downarrow$
      & LPIPS$\downarrow$ & FID$\downarrow$ \\
      \midrule[0.1em]
      RDDM \citep{liu2024rddm}       & 0.0862 & 15.67 & 0.0963 & 8.52 & 0.0568 & 14.75 \\
      \midrule[0.1em]
      CTSDG \citep{guo2021image}     & 0.0798  & 11.64 & 0.0722 & 7.87 & 0.0498 & 15.68 \\
      \midrule[0.1em]
      MISF \citep{li2022misf}     & 0.0764     &  11.72    & 0.0803 & \textcolor{blue}{7.68} & 0.0695 & 20.43 \\
      \midrule[0.1em]
      TransRef \citep{liu2025transref}     & \textcolor{blue}{0.0745}      & \textcolor{red}{\textbf{9.13}}     & \textcolor{blue}{0.0692} & 7.80 & \textcolor{blue}{0.0490} & \textcolor{blue}{10.17} \\
      \midrule[0.1em]
       \textbf{DRDD(Ours)}  & \textcolor{red}{\textbf{0.0542}} & \textcolor{blue}{10.30} & \textcolor{red}{\textbf{0.0528}} & \textcolor{red}{\textbf{7.15}} & \textcolor{red}{\textbf{0.0382}} & \textcolor{red}{\textbf{10.02}} \\
      \bottomrule[0.1em]
    \end{tabular}%
  }
  \vspace{-2.5mm}
  \label{tab:4}
\end{table}

\subsection{More Single I2I Tasks in Single Domain.}
\begin{table}[t]
    \centering
    \renewcommand\arraystretch{1.1}
    \caption{\textit{Performance comparison of RDDM and DRDD on Edges2Handbags and Edges2Shoes dataset.}}
    \resizebox{0.99\columnwidth}{!}{
        \begin{tabular}{c|cccccc}
        \toprule
        \multirow{2}{*}{\textbf{\makecell{Table A2}}}
            & \multicolumn{3}{c}{Edges2Handbags-256$\times$256} & \multicolumn{3}{c}{Edges2Shoes-256$\times$256} \\ 
            \cline{2-7}
            & SSIM$\uparrow$ & FID$\downarrow$ & LPIPS$\downarrow$ & SSIM$\uparrow$ & FID$\downarrow$ & LPIPS$\downarrow$ \\
            \midrule
            RDDM  & 0.645 & 5.72 & 0.256  & 0.652 & 23.57 & 0.178 \\
            \textbf{DRDD} & \textbf{0.723} & \textbf{4.76} & \textbf{0.247} & \textbf{0.782} & \textbf{9.658} & \textbf{0.154}\\
            \bottomrule
    \end{tabular}}
    \label{tab:edges2bags}
\end{table}

\begin{table*}[h]
    \centering
    \scriptsize
    \fboxsep0.75pt
    \setlength\tabcolsep{7pt}
    \caption{\textit{Ablation Studies and Further Investigaitons on All-in-One-5 daytaset.} SSIM ($\uparrow$) and \colorbox{clblue!50}{LPIPS ($\downarrow$)} are reported on the full RGB images. The \textbf{best} performances are highlighted.}
    \vspace{-3mm}
    \label{tab:exp:5deg}
    \begin{tabularx}{\textwidth}{lX*{12}{c}}
    \toprule
     & \multirow{2}{*}{Method} & \multicolumn{2}{c}{\textit{Dehazing}} & \multicolumn{2}{c}{\textit{Deraining}} & \multicolumn{2}{c}{\textit{Denoising}} 
     & \multicolumn{2}{c}{\textit{Deblurring}} & \multicolumn{2}{c}{\textit{Low-Light}} & \multicolumn{2}{c}{\multirow{2}{*}{Average}}  \\
     \cmidrule(lr){3-4} \cmidrule(lr){5-6} \cmidrule(lr){7-8} \cmidrule(lr){9-10} \cmidrule(lr){11-12}
     && \multicolumn{2}{c}{SOTS} & \multicolumn{2}{c}{Rain100L} & \multicolumn{2}{c}{BSD68\textsubscript{$\sigma$=25}} 
     & \multicolumn{2}{c}{GoPro} & \multicolumn{2}{c}{LOLv1} &  \\
     \midrule
    \multirow{5}{*} &
    DRDD w.o Denoising Network & .9690 &\cc{.0149} & .9706 &\cc{.0244} & .8771 & \cc{.1136} & .8241 & \cc{.1923} & .8469 & \cc{.1251} & .8971 & \cc{.0941} \\ 
    
    & DRDD with General Denoising Network & .9652 & \cc{.0145} & .9701 & \cc{.0248} & .8867 & \cc{.1054} & .8712 &\cc{.1748} & .8490 & \cc{.1232} & .9084 & \cc{.0885} \\

    & Entangled Baseline & .9601 &\cc{.0136} & .9727 &\cc{.0231} & .8818 & \cc{.1153} & .8405 & \cc{.1849} & .8283 & \cc{.1555} & .8966 & \cc{.0985} \\
    & DRDD (Ours) & \textbf{.9715} & \cc{\textbf{.0122}} & \textbf{.9777} & \cc{\textbf{.0153}} & .8891 & \cc{\textbf{.0977}} & \textbf{.8806} & \cc{\textbf{.1342}} & \textbf{.8640} & \cc{\textbf{.1033}} &\textbf{.0916} &\cc{\textbf{.0762}}\\
     \bottomrule
    \end{tabularx}
    \label{Tab:9}
    \vspace{-3mm}
\end{table*}

\paragraph{Image Inpainting.} To validate the generalization capability of the proposed framework across different tasks, we conducted inpainting experiments under various settings. Specifically, we evaluated the inpainting performance at different resolutions using two mask patterns (center and irregular) on the CelebA-HQ \citep{celebahq} dataset. It is worth noting that irregular masks more closely reflect the restoration demands of complex occlusions encountered in real-world scenarios, whereas rectangular masks are typically used to simulate cases in which local image information is entirely missing. Furthermore, we conducted experiments at both 64×64 and 256×256 resolutions to further confirm the model’s adaptability to different input scales. 
As shown in Tab.~\ref{tab:4}, our method achieved the best or second-best results across all configurations, demonstrating its superior performance and robustness under diverse task conditions. 
\paragraph{Image Deraining.}
We compare DRDD's performance on image deraining through Rain100H and Rain100L datasets with RDDM~\citep{liu2024rddm} and DiffuIR~\citep{zheng2024diffuir}. Results are shown in Tab.~\ref{tab:single_task_two_metrics}.

\begin{figure}[t]
    \caption{\textit{Visual comparison of RDDM and DRDD on Edges2Handbags dataset.}}
    \centering
    \includegraphics[width=0.99\linewidth]{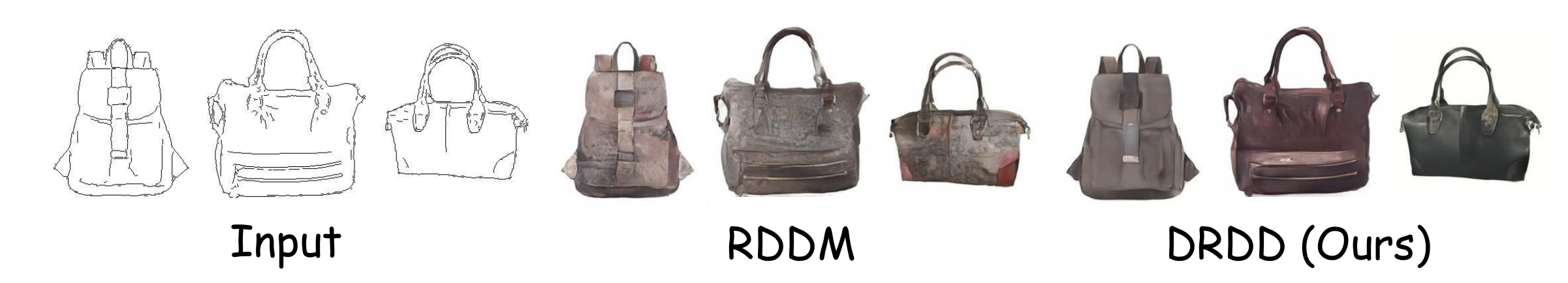} 
    \label{fig:edges2bags}
\end{figure}

\paragraph{Edges to Objects.} As shown in Tab.~\ref{tab:edges2bags} and Fig.~\ref{fig:edges2bags}, we evaluated DRDD on style transfer task using edges2handbags \& edges2shoes datasets. DRDD significantly outperforms its baseline RDDM. Visual results confirm that the injected noise does not adversely affect DRDD.

\paragraph{Image Denoising.}
We compare DRDD's performance on image denoising through BSD400~\citep{bsd400} and WED~\citep{Ma2017WED} with AdaIR\citep{cui2025adair} and VLUNet~\citep{vlunet}. Results are shown in Tab.~\ref{tab:single_task_two_metrics}.

\paragraph{Super Resolution.} To further demonstrate the generalization ability of our method across different tasks, we conducted a qualitative experiment on the FFHQ~\citep{ffhq} dataset to evaluate DRRD's performance in the image super-resolution task.

\subsection{Ablation Studies.}
We compare DRDD against a matched ``coupled" baseline with the same architecture. Although only a single neural network is used, we assign it the combined number of parameters of two decoupled neural networks, along with the double cumulative inference steps. We test it on All-in-One-5 benchmark and results are displayed in Tab.~\ref{Tab:9}.

\subsection{Further Investigations of Denoising Model}
\textbf{Investigating the Training of Denoising Models on Isolated Datasets.}
Based on the properties we previously discussed, the denoising model can be trained on a isolate dataset. To further validate this, We train the denoising model for 300,000 steps on the All-in-One-3 dataset and combined it with the residual removal model, which was trained for 300,000 steps on the All-in-One-5 dataset. The results in Tab.~\ref{Tab:9} show that despite the denoising model never having encountered low-light or deblurring data, it still learned a certain level of generalized denoising capability from the other datasets. 
Such a strategy can significantly reduce the number of training-required parameters.

\textbf{Investigating Inference Without Denoising Models.}
In our decoupled model, the residual removal model is responsible for directional semantic elimination in the noise domain, while the denoising model is tasked with translating the data back to the noise-free domain and performing refined restoration. The necessity for a dedicated denoising model, rather than simply subtracting the noise added in the first step, arises because during directional semantic transformation, the residual removal model also exerts some influence on the noise. This perspective has been confirmed by experiment results in Tab.~\ref{Tab:9}, DRDD without Denoising Network.

\subsection{Cost and Efficiency}
\label{app:cost}
\subsubsection{Parameters, Flops and Inference Time}
In this section, we present the key performance metrics of our model. The floating-point operations (FLOPs) were computed by performing a forward inference using a randomly generated matrix of size 256×256×3. For measuring inference time, the model was trained on the AIOIR5 architecture and evaluated on the Rain100L dataset. The reported inference time was obtained under the condition where the model runs for two steps on each of its two sub-modules: residual removing and denoising.

\begin{table}[htbp]
    \vspace{-0.6em}
    \centering
\caption{\textit{Computational resource comparisons: Parameters, FLOPs, and Runtime.} FLOPs are measured on the patch size of
256 × 256 × 3, while Runtime are measured on Rain100L testing set. The sampling steps of denoising model and residual-removal model are both 2.}
    \renewcommand\arraystretch{1.1}
    \resizebox{\columnwidth}{!}{
    \begin{tabular}{c|l|ccc|c|ccc}
        \hline
        & \textbf{Method} & \textbf{Para (M)} & \textbf{FLOPs (G)} & \textbf{Step} & \textbf{Latency (s)} & \textbf{SSIM$\uparrow$} & \textbf{LPIPS$\downarrow$} & \textbf{FID$\downarrow$} \\
        \hline
        & AdAIR  & 29 & 138 & 1 & 0.24 & \textcolor{blue}{0.909} & 0.089 & 26.1 \\
        & VLUNet  & 123 & 143 & 1 & 0.74 & 0.904 & 0.096 & 27.9 \\
        \hline
        \multirow{6}{*}{\makecell{\textbf{Diff.}}} 
        & DiffUIR & 138 & 284 & 3 & 0.75 & 0.869 & 0.117 & 33.7 \\
        & DA-CLIP & 174 & 380 & 3 & 0.76 & 0.876 & 0.108 & 20.0 \\
        & RDDM - S & 138 & 278 & 4 & 0.92 & 0.878 & 0.105 & 27.8 \\
        & \cellcolor{gray!15}RDDM - L & \cellcolor{gray!15}138$\times$2 & \cellcolor{gray!15}278$\times$2 & \cellcolor{gray!15}4 & \cellcolor{gray!15}1.84 & \cellcolor{gray!15}0.897 & \cellcolor{gray!15}0.098 & \cellcolor{gray!15}23.6 \\
        & DRDD - S & 7+35 & 32+69 & 2 + 2 & 0.10+0.23 & 0.908 & \textcolor{blue}{0.080} & \textcolor{blue}{19.6} \\
        & \cellcolor{gray!15}\textbf{DRDD - L} & \cellcolor{gray!15}7+138 & \cellcolor{gray!15}32+278 & \cellcolor{gray!15}2 + 2 & \cellcolor{gray!15}0.10+0.45 & \cellcolor{gray!15}\textcolor{red}{\textbf{0.916}} & \cellcolor{gray!15}\textcolor{red}{\textbf{0.073}} & \cellcolor{gray!15}\textcolor{red}{\textbf{18.3}} \\
        \hline
    \end{tabular}
    }
    \vspace{-0.6em}
    \label{tab:efficiency_comparison}
\end{table}
        

\begin{table}[h]
    \centering
    \caption{\textit{Performance of different sampling steps.} SSIM ($\uparrow$) and LPIPS ($\downarrow$) are reported.}
    \vspace{-2mm}
    \resizebox{\linewidth}{!}{
    \begin{tabular}{c c c c c c c c}
        \toprule[0.15em]
        \multicolumn{2}{c}{\textbf{Sampling Steps}} &
        \multicolumn{2}{c}{\textbf{Dehazing}} &
        \multicolumn{2}{c}{\textbf{Denoise}} &
        \multicolumn{2}{c}{\textbf{Deraining} }\\
        \cmidrule(lr){1-2}\cmidrule(lr){3-4}\cmidrule(lr){5-6}\cmidrule(lr){7-8}
        de-res &
        de-noise &
        SSIM$\uparrow$ &
        LPIPS$\downarrow$ &
        SSIM$\uparrow$ &
        LPIPS$\downarrow$ &
        SSIM$\uparrow$ &
        LPIPS$\downarrow$ \\
        \midrule[0.1em]
        
        2 & 2 &
        0.9787 & 0.0104 & 0.8919 & 0.0929 & 0.9767 & 0.0149\\
        \midrule[0.1em]
        2 & 5 &
        0.9790 & 0.0103 & 0.8900 & 0.0940 & 0.9774 & 0.0146\\
        \midrule[0.1em]
        5 & 2 &
        0.9791 & 0.0101 & 0.8918 & 0.0930 & 0.9767 & 0.0149\\
        \midrule[0.1em]
        5 & 5 &
        0.9794 & 0.0100 & 0.8899 & 0.0940 & 0.9774 & 0.0146\\
        \midrule[0.1em]
        10 & 10 &
        0.9792 & 0.0101 & 0.8838 & 0.0951 & 0.9777 & 0.0144\\
        \bottomrule[0.1em]
    \end{tabular}
    }
    \vspace{-2mm}
    \label{tab:ablation}
\end{table}

\subsubsection{Influence of sampling steps}
\label{app:sampling_steps}
We conducted experiments to investigate the influence of different sampling steps on the results using the All-in-One-3 dataset. The results, shown in Tab.~\ref{tab:ablation}, indicate that there is no significant difference between 2-step and 10-step inference when using DDIM sampling.

\subsection{Generalization Ability}

\begin{table}[t]
    \centering
    \caption{Performance comparison between our method and other universal image restoration methods on real unseen datasets.  Best results are highlighted in \textcolor{red}{\textbf{red}}, while the second-best results are \textcolor{blue}{\underline{blue}}.}
    \vspace{-2mm}
    \resizebox{\columnwidth}{!}{
    \begin{tabular}{lc c c c}
        \toprule[0.15em]
        \multirow{2}{*}{\textbf{Method}} &
        \textbf{Low-Light} &
        \textbf{Deraining} &
        \textbf{Denoising} &
        \textbf{Deblurring} \\
        \cmidrule(lr){2-2}\cmidrule(lr){3-3}\cmidrule(lr){4-4}\cmidrule(lr){5-5}
        &
        NIQE$\downarrow$ &
        NIQE$\downarrow$ &
        PSNR$\uparrow$ / SSIM$\uparrow$ / LPIPS$\downarrow$ &
        PSNR$\uparrow$ / SSIM$\uparrow$ / LPIPS$\downarrow$ \\
        \midrule[0.1em]

        VLUNet &
        4.40 &
        \textcolor{red}{\textbf{3.73}} &
        24.6 / 0.490 / \textcolor{blue}{\underline{0.664}} &
        26.8 / 0.822 / 0.175 \\
        \midrule[0.1em]
        DiffuIR &
        \textcolor{red}{\textbf{3.89}} &
        4.49 &
        \textcolor{blue}{\underline{28.6}} / \textcolor{blue}{\underline{0.674}} / 0.569 &
        \textcolor{blue}{\underline{28.4}} / \textcolor{blue}{\underline{0.857}} / \textcolor{blue}{\underline{0.186}} \\
        \midrule[0.1em]
        \textbf{DRDD(Ours)} & 
        \textcolor{blue}{\underline{4.31}} &
        \textcolor{blue}{\underline{3.80}} &
        \textcolor{red}{\textbf{34.8}} / \textcolor{red}{\textbf{0.865}} / \textcolor{red}{\textbf{0.274}} &
         \textcolor{red}{\textbf{28.5}} / \textcolor{red}{\textbf{0.861}} / \textcolor{red}{\textbf{0.172}} \\
        \bottomrule[0.1em]
    \end{tabular}
    }
    \vspace{-2mm}
    \label{tab:unseen_data}
\end{table}
To further validate the proposed method’s generalization capability on unseen data distribution, we evaluated its performance on unseen data across four representative image restoration tasks: low-light enhancement, deraining, denoising, and deblurring. The corresponding evaluation metrics are presented in Table \ref{tab:unseen_data}. Specifically, the low-light enhancement task was conducted using a combined dataset comprising MEF \citep{ma2015perceptual}, NPE \citep{wang2013naturalness}, and DICM \citep{lee2013contrast}; the deraining task employed the Practical \citep{yang2017deep} dataset; the denoising task utilized the SIDD \citep{abdelhamed2018high} dataset; and the deblurring task was assessed using the RealBlur \citep{rim2020real} dataset.

\subsection{PSNR results}
We provide the PSNR results of All-in-One-5 model as below. As shown in Tab.~\ref{tab:AllinOne5} of the paper, DRDD consistently achieves the best FID, LPIPS among all the methods, the best PSNR, SSIM among diffusion-based approaches, while remaining highly competitive on PSNR, SSIM with non-diffusion methods.
\begin{table}[h!]
\centering
\resizebox{\columnwidth}{!}{
\begin{tabular}{c|l|ccccc}
\hline
\multirow{2}{*}{\rotatebox{0}{\textbf{Table C.7}}}
& \textbf{Task} & \textbf{Low-Light} & \textbf{Deraining} & \textbf{Denoising} & \textbf{Deblurring} & \textbf{Dehazing} \\
& \textbf{Dataset} & \textbf{LOLv1} & \textbf{Rain100L} & \textbf{CBSD68} & \textbf{GoPro} & \textbf{SOTS} \\
\hline
\multirow{3}{*}{\rotatebox{0}{\footnotesize \textbf{Non-Diff.}}}
& DFPIR & \textcolor{red}{\textbf{23.80}} & 37.50 & \textcolor{blue}{31.26} & \textcolor{blue}{28.80} & \textcolor{red}{\textbf{31.24}} \\
& AdAIR & \textcolor{blue}{22.94} & \textcolor{blue}{37.85} & 31.29 & 28.11 & 29.93 \\
& VLU-NET & 22.25 & \textcolor{red}{\textbf{38.36}} & \textcolor{red}{\textbf{31.39}} & \textcolor{red}{\textbf{28.85}} & \textcolor{blue}{30.56} \\
\hline
\multirow{3}{*}{\rotatebox{0}{\textbf{Diff.}}} 
& DiffuIR & 19.33 & 34.88 & \textcolor{blue}{30.12} & 26.48 & \textcolor{blue}{30.27} \\
& DA-CLIP & \textcolor{blue}{20.12} & \textcolor{blue}{35.86} & 25.17 & \textcolor{blue}{27.34} & 26.88 \\
& \textbf{DRDD (Ours)} & \textcolor{red}{\textbf{23.00}} & \textcolor{red}{\textbf{36.86}} & \textcolor{red}{\textbf{31.47}} & \textcolor{red}{\textbf{29.08}} & \textcolor{red}{\textbf{30.56}} \\
\hline
\end{tabular}
}
\label{tab:psnr_comparison}
\vspace{-0.9em}
\end{table}

\subsection{Comparing data pruning results with more methods}
We provide comparison with AdaIR~\citep{cui2025adair} and RDDM~\citep{liu2024rddm} on pruned All-in-One-3 dataset, SSIM and LPIPS are reported.
\begin{table}[htbp]
    \centering
    \renewcommand\arraystretch{1.1}
    \resizebox{0.99\columnwidth}{!}{
        \begin{tabular}{l|cccc}
            \hline
            \multirow{2}{*}{\textbf{Table C.8}} & \textbf{25\%} & \textbf{50\%} & \textbf{75\%} & \textbf{100\%} \\
            \cline{2-5}
            & SSIM$\uparrow$ / LPIPS$\downarrow$ & SSIM$\uparrow$ / LPIPS$\downarrow$ & SSIM$\uparrow$ / LPIPS$\downarrow$ & SSIM$\uparrow$ / LPIPS$\downarrow$ \\
            \hline
            RDDM  & .929 / .058 & .935 / .056 & .941 / .049 & .942 / .047 \\
            AdAIR & .944 / .050 & .948 / .046 & .949 / .045 & .951 / .042 \\
            \hline
            \textbf{DRDD (Ours)} & \textbf{.947} / \textbf{.041} & \textbf{.949} / \textbf{.040} & \textbf{.950} / \textbf{.039} & \textbf{.951} / \textbf{.038} \\
            \hline
        \end{tabular}
    }
    \label{tab:data_scaling}
\end{table}

\subsection{Sensitivity Analysis of Noise Injection Level} 
Noise Injection level is relatively insensitive within the range of 0.8–1.3 (calculated via Eq.~\ref{eq:12} across different datasets), as validated by Fig.~\ref{fig:noise_scale} on the All-in-One-5 dataset and Table A5 (PSNR metric) on the Rain100H and Edges2Bags datasets.
Outside this 0.8–1.3 range, sensitivity increases.
If optimal performance is required, experimentation in 0.8–1.3 range is necessary.
 
\begin{table}[htbp]
    \centering
    \resizebox{\columnwidth}{!}{
    \begin{tabular}{l|c|c|c|c|c|c|c}
        \hline
        \textbf{Table C.9} & \textbf{Recommend $\sigma$}& \textbf{0.1} & \textbf{0.5} & \textbf{0.8} &\textbf{1.0} & \textbf{1.5} & \textbf{2.0}\\
        \hline
        Rain100H & 0.8-1.2 & 29.64 & 30.97 & 32.01 & 32.33 & 31.94 & 31.68 \\
        Edges2Bags & 0.8-1.1 & 16.34 & 19.22 & 20.05 & 19.81 & 18.76 & 17.92 \\
        \hline
    \end{tabular}
    }
    \label{tab:parameters}
\end{table}

\section{More Visual Comparisons}
\label{Appendix:d}
As shown in Fig.~\ref{fig:restoration_result} - Fig.\ref{fig:Irregular_inpainting_result}, we provide additional visual examples of DRDD on various image-to-image transformation tasks, including restoration and inpainting. These results serve as supplementary visualizations to those presented in the main paper.

\begin{figure*}[t]
    \centering
    \includegraphics[width=0.99\linewidth]{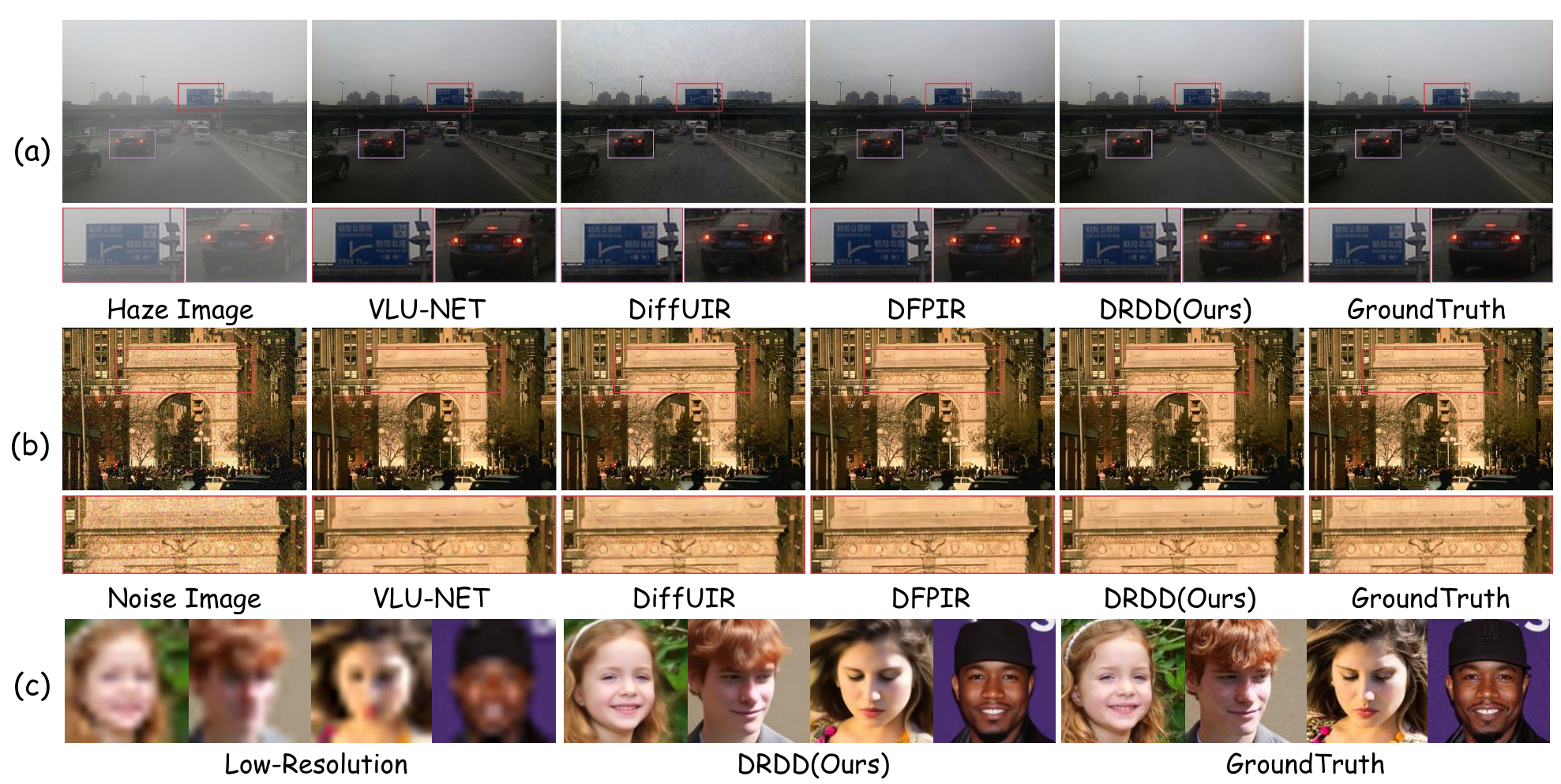}  
    \caption{\textit{Visual results of state-of-the-art methods and our proposed DRDD.} (a) Comparison of haze image restoration results on the SOTS dataset \citep{SOTS}. (b) Comparison of noise restoration results on the CBSD68 dataset~\citep{bsd400}. (c) Super-Resolution result in FFHQ~\citep{ffhq}. Zoom in for best view. } 
    \label{fig:restoration_result}  
\end{figure*}


\begin{figure*}[t]
    \centering
    \includegraphics[width=0.99\linewidth]{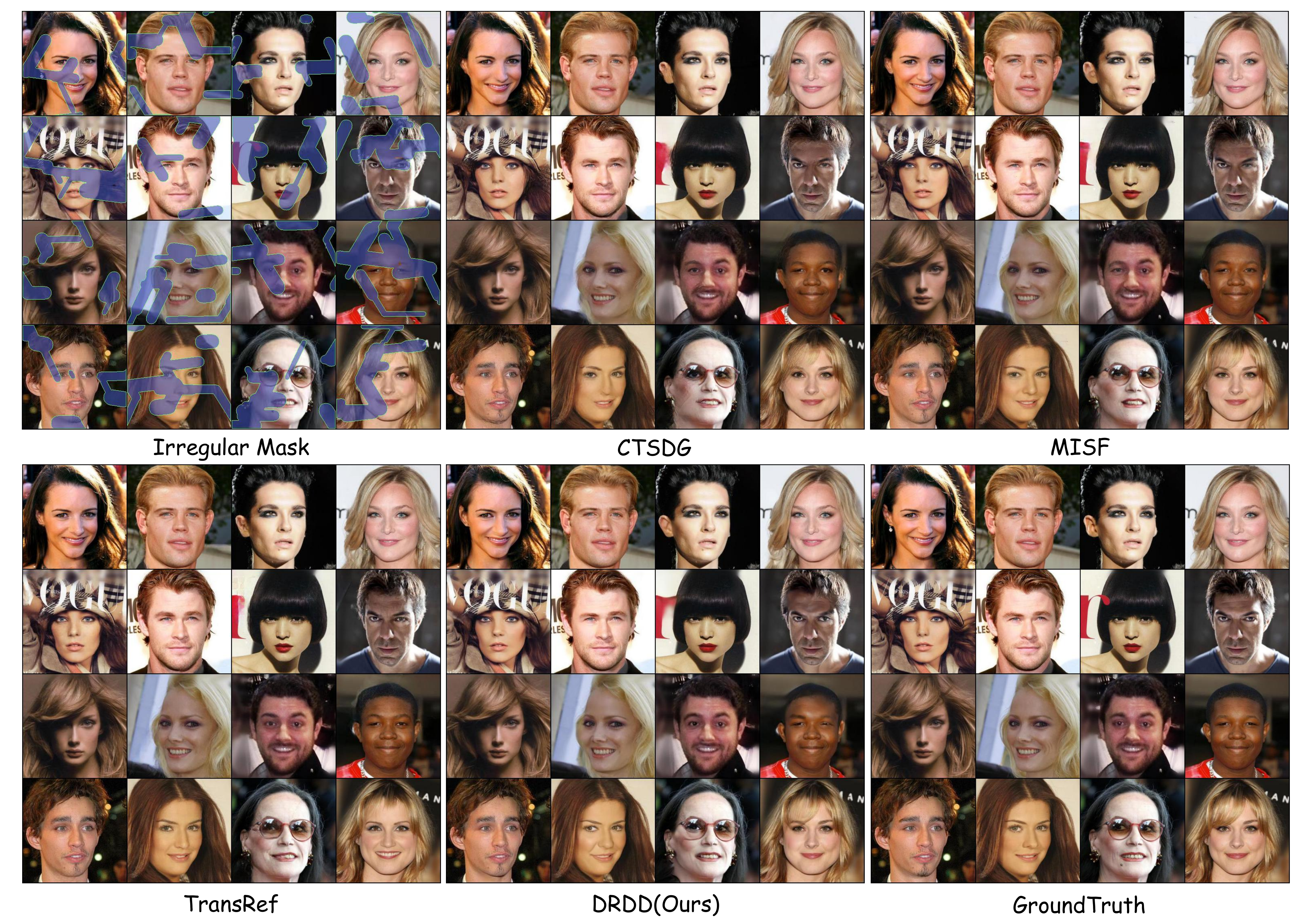}  
    \caption{\textit{Irregular Mask inpainting results of state-of-the-art methods and our proposed DRDD.} Zoom in for best view.} 
    \label{fig:Irregular_inpainting_result}  
\end{figure*}

\end{document}